\xpatchcmd{\algorithmic}{\itemsep\z@}{\itemsep=0.5ex plus1pt}{}{}
\newtheorem{proposition}{Proposition}
\newtheorem{lemma}{Lemma}
\tikzstyle{obs} = [circle,fill=white,draw=black,inner sep=1pt,minimum size=20pt,font=\fontsize{10}{10}\selectfont,node distance=1,thick]
\tikzstyle{latent} = [obs,dotted]
\tikzstyle{protected} = [obs,text=Orange,draw=Orange]
\tikzstyle{unfair} = [obs,text=BrickRed,draw=BrickRed]
\tikzstyle{target} = [obs,text=MidnightBlue,draw=MidnightBlue]
\tikzstyle{feature} = [obs,text=ForestGreen,draw=ForestGreen]
\newcommand{\B}[1]{\mathbf{#1}}
\newcommand{\bR}{\ensuremath \mathbb{R}}
\newcommand{\cF}{\ensuremath \mathcal{F}}
\newcommand{\cT}{\ensuremath \mathcal{T}}
\newcommand{\xb}{\ensuremath \B{x}}
\newcommand{\yb}{\ensuremath \B{y}}
\newcommand{\ab}{\ensuremath \B{a}}
\newcommand{\zb}{\ensuremath \B{z}}
\newcommand{\qb}{\ensuremath \B{q}}
\DeclareMathOperator*{\argmax}{arg\,max}
\DeclareMathOperator*{\argmin}{arg\,min}
\newcommand\pfrac[2]{\frac{\partial #1}{\partial #2}}
\newcommand\hlf{\nicefrac{1}{2}}
\newcommand*{\eg}{\textit{e.g.}\@\xspace}
\newcommand*{\ie}{\textit{i.e.}\@\xspace}
\newcommand*{\wrt}{\textit{w.r.t.}\@\xspace}
\definecolor{CRuby}{HTML}{D81159}
\definecolor{CMagenta}{HTML}{8F2D56}
\definecolor{CGreen}{HTML}{218380}
\definecolor{CYellow}{HTML}{FBB13C}
\definecolor{CBlue}{HTML}{73D2DE}
\def\1{\bm{1}}
\DeclareMathAlphabet{\mathsfit}{\encodingdefault}{\sfdefault}{m}{sl}
\SetMathAlphabet{\mathsfit}{bold}{\encodingdefault}{\sfdefault}{bx}{n}
\icmltitlerunning{Learning Binary Decision Trees by Argmin Differentiation}
\begin{document}

\twocolumn[
\icmltitle{Learning Binary Decision Trees by Argmin Differentiation}





\begin{icmlauthorlist}
\icmlauthor{Valentina Zantedeschi}{inria,ucl}
\icmlauthor{Matt J. Kusner}{ucl}
\icmlauthor{Vlad Niculae}{am}
\end{icmlauthorlist}

\icmlaffiliation{inria}{Inria, Lille - Nord Europe research centre}
\icmlaffiliation{ucl}{University College London, Centre for Artificial Intelligence}
\icmlaffiliation{am}{Informatics Institute, University of Amsterdam}

\icmlcorrespondingauthor{V Zantedeschi}{vzantedeschi@gmail.com}
\icmlcorrespondingauthor{M Kusner}{m.kusner@ucl.ac.uk}
\icmlcorrespondingauthor{V Niculae}{v.niculae@uva.nl}


\vskip 0.3in
]



\printAffiliationsAndNotice{}  


\begin{abstract}
We address the problem of learning binary decision trees that partition data for some downstream task. We propose to learn discrete parameters (i.e., for tree traversals and node pruning) and continuous parameters (i.e., for tree split functions and prediction functions) \emph{simultaneously} using argmin differentiation. We do so by sparsely relaxing a mixed-integer program for the discrete parameters, to allow gradients to pass through the program to continuous parameters. We derive customized algorithms to efficiently compute the forward and backward passes. This means that our tree learning procedure can be used as an (implicit) layer in arbitrary deep networks, and can be optimized with arbitrary loss functions. We demonstrate that our approach produces binary trees that are competitive with existing single tree and ensemble approaches, in both supervised and unsupervised settings. Further, apart from greedy approaches (which do not have competitive accuracies), our method is faster to train than all other tree-learning baselines we compare with.
The code for reproducing the results is available at \url{https://github.com/vzantedeschi/LatentTrees}.
\end{abstract}

\section{Introduction}
Learning discrete structures from unstructured data is extremely useful for a wide variety of real-world problems \citep{gilmer2017neural,kool2018attention,yang2018deep}.
One of the most computationally-efficient, easily-visualizable discrete structures that are able to represent complex functions are \emph{binary trees}. For this reason, there has been a massive research effort on how to learn such binary trees since the early days of machine learning \citep{payne1977algorithm,breiman1984classification,bennett1992decision,bennett1996optimal}.
Learning binary trees has historically been done in one of three ways.
The first is via \emph{greedy optimization}, which includes popular decision-tree methods such as classification and regression trees
\citep[CART, ][]{breiman1984classification} and ID3 trees \citep{quinlan1986induction}, among many others. These methods optimize a splitting criterion for each tree node, based on the data routed to it.
The second set of approaches are based on \emph{probabilistic
relaxations}~\citep{irsoy2012soft,yang2018deep,DBLP:conf/icml/0001PMTM20},
optimizing all splitting parameters at once via gradient-based methods, by relaxing hard branching decisions into branching probabilities.
The third approach optimizes trees using \emph{mixed-integer programming} \citep[MIP, ][]{bennett1992decision,bennett1996optimal}. This jointly optimizes all continuous and discrete parameters to find globally-optimal trees.\footnote{Here we focus on learning single trees instead of tree ensembles; our work easily extends to ensembles.}


\begin{figure}[t!]
    \centering
    {\includegraphics[width=\columnwidth]{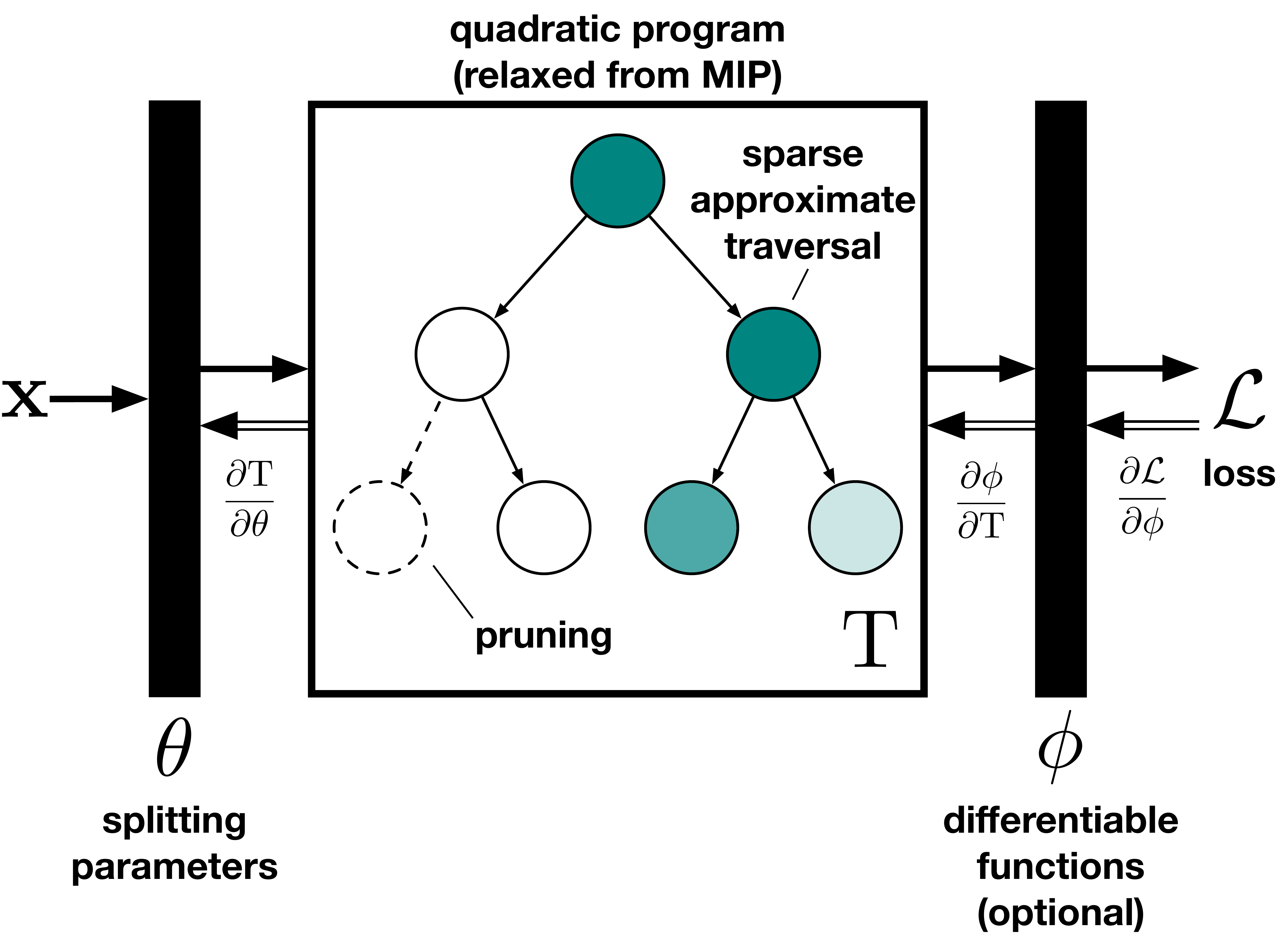}
    }
    \caption{A schematic of the approach.}
    \label{fig:overall}
\end{figure}

Each of these approaches has their shortcomings. First, greedy optimization is
generally suboptimal: tree splitting criteria are even intentionally crafted to be different than the global tree loss, as the global loss may not encourage tree growth \citep{breiman1984classification}. Second, probabilistic relaxations: (a) are rarely sparse, so inputs contribute to all branches even if they are projected to hard splits after training; (b) they do not have principled ways to prune trees, as the distribution over pruned trees is often intractable. Third, MIP approaches, while optimal, are only computationally tractable on training datasets with thousands of inputs \citep{bertsimas2017optimal}, and do not have well-understood out-of-sample generalization guarantees.


In this paper we present an approach to binary tree learning based on sparse relaxation and argmin differentiation (as depicted in Figure~\ref{fig:overall}).
Our main insight is that by quadratically relaxing an MIP that learns the discrete parameters of the tree (input traversal and node pruning), we can differentiate through it to simultaneously learn the continuous parameters of splitting decisions. This allows us to leverage the superior generalization capabilities of stochastic gradient optimization to learn decision splits, and gives a principled approach to learning tree pruning.
Further, we derive customized algorithms to compute the forward and backward passes through this program that are much more efficient than generic approaches \citep{optnet,cvxpylayers2019}.
The resulting learning method 
can learn trees with any differentiable splitting functions to minimize any differentiable loss, both tailored to the data at hand. 

\paragraph{Notation.} We denote scalars, vectors, and sets, as $x$, $\xb$,
and $\mathcal{X}$, respectively.
A binary tree is a set $\mathcal{T}$ containing nodes $t \in \mathcal{T}$
with the additional structure that each node has at most one left child and at
most one right child, and each node except the unique root has exactly one
parent.
We denote the $l_2$ norm of a vector by $\|\xb\| \coloneqq\left(\sum_i
x_i^2\right)^{\frac{1}{2}}$. The unique projection of point $\xb \in \mathbb{R}^d$ onto a convex set $C \subset
\mathbb{R}^d$ is $\operatorname{Proj}_{C}(\xb) \coloneqq \argmin_{\yb \in C} \|\yb - \xb\|$.
In particular, projection onto an interval is given by
$\operatorname{Proj}_{[a, b]}(x) = \max(a, \min(b, x))$.

\section{Related Work}
The paradigm of \emph{binary tree learning} has the goal of finding a tree that iteratively splits data into meaningful, informative subgroups, guided by some criterion.
Binary tree learning appears in a wide variety of problem settings across machine learning. We briefly review work in two learning settings where latent tree learning plays a key role: 1. \emph{Classification/Regression}; and 2. \emph{Hierarchical clustering}. 
Due to the generality of our setup (tree learning with arbitrary split functions, pruning, and downstream objective), our approach can be used to learn trees in any of these settings. Finally, we detail how parts of our algorithm are inspired by recent work in isotonic regression.

\paragraph{Classification/Regression.}
Decision trees for classification and regression have a storied history, with early popular methods that include classification and regression trees
\citep[CART, ][]{breiman1984classification}, ID3 \citep{quinlan1986induction}, and C4.5 \citep{quinlan1993c4}. While powerful, these methods are greedy: they sequentially identify `best' splits as those which optimize a split-specific score (often different from the global objective). As such, learned trees are likely sub-optimal for the classification/regression task at hand.
To address this, \citet{carreira2018alternating} proposes an alternating algorithm for refining the structure and decisions of a tree so that it is smaller and with reduced error, however still sub-optimal.
Another approach is to probabilistically relax the discrete splitting decisions of the tree \citep{irsoy2012soft,yang2018deep,tanno2019adaptive}. This allows the (relaxed) tree to be optimized \wrt the overall objective using gradient-based techniques, with known generalization
benefits \citep{hardt2016train,hoffer2017train}. Variations on this approach aim at learning tree ensembles termed `decision forests' \citep{kontschieder2015deep,lay2018random,popov2019neural,DBLP:conf/icml/0001PMTM20}. The downside of the probabilistic relaxation approach is that there is no principled way to prune these trees  as inputs pass through all nodes of the tree with some probability.
A recent line of work has explored mixed-integer program (MIP) formulations for learning decision trees.
Motivated by the billion factor speed-up in MIP in the last 25 years,
\citet{rudin2018learning} proposed a mathematical programming approach for learning provably optimal decision lists~\citep[one-sided decision trees;][]{letham2015interpretable}.
This resulted in a line of recent follow-up works extending the problem to binary decision trees~\citep{hu2019optimal,lin2020generalized} by adapting the efficient discrete optimization algorithm \citep[CORELS, ][]{angelino2017learning}.
Related to this line of research, \citet{bertsimas2017optimal} and its follow-up works~\citep{gunluk2018optimal,aghaei2019learning,verwer2019learning,aghaei2020learning} phrased the objective of CART as an MIP that could be solved exactly.
Even given this consistent speed-up all these methods are only practical on datasets with at most thousands of inputs \citep{bertsimas2017optimal} and with non-continuous features.
Recently, \citet{DBLP:conf/nips/ZhuMPNK20} addressed these tractability concerns principally with a data selection mechanism that preserves information.
Still, the out-of-sample generalizability of MIP approaches is not well-studied, unlike stochastic gradient descent learning.

\paragraph{Hierarchical clustering.}
Compared to standard flat clustering, hierarchical clustering provides a structured organization of unlabeled data in the form of a tree. To learn such a clustering the vast majority of methods are greedy and work in one of two ways: 1. \emph{Agglomerative}: a `bottom-up' approach that starts each input in its own cluster and iteratively merges clusters; and 2. \emph{Divisive}: a `top-down' approach that starts with one cluster and recusively splits clusters  \citep{zhang1997birch,widyantoro2002incremental,krishnamurthy2012efficient,dasgupta2016cost,kobren2017hierarchical,moseley2017approximation}. These methods suffer from similar issues as greedy approaches for classification/regression do: they may be sub-optimal for optimizing the overall tree. Further they are often computationally-expensive due to their sequential nature. Inspired by approaches for classification/regression, recent work has designed probabilistic relaxations for learning hierarchical clusterings via gradient-based methods \citep{monath2019gradient}.

Our work takes inspiration from both the MIP-based and gradient-based
approaches. Specifically, we frame learning the discrete tree parameters as an MIP, which we sparsely relax to allow continuous parameters to be optimized by argmin differentiation methods.

\paragraph{Argmin differentiation.}
Solving an optimization problem as a differentiable module within a parent problem tackled with gradient-based optimization methods is known as argmin differentiation, an instance of bi-level optimization \citep{colson2007overview,gould}. This situation arises in as diverse scenarios as hyperparameter optimization
\citep{pedregosa2016hyperparameter}, meta-learning \citep{metalearning}, or structured prediction \citep{ves,domke,sparsemap}.
General algorithms for quadratic \cite{optnet} and disciplined convex
programming \citep[Section 7,][]{amos2019differentiable,cvxpylayers2019,diffcone} have
been given, as well as expressions for more specific cases like isotonic
regression \citep{djolonga}. Here, by taking advantage of the structure of the decision tree induction problem, we obtain a direct, efficient algorithm.

\paragraph{Latent parse trees.} Our work resembles but should not be confused with the latent
parse tree literature in natural language processing
\citep{yogatama2017learning,lapata,choi2018learning,williams2018latent,sparsemapcg,caio-iclr,caio-acl,maillard2019jointly,kimcompound,kimrnng}.
This line of work has a different goal than ours: to induce a tree for each
individual data point (e.g., sentence). In contrast, our work aims to learn a
single tree, for all instances to pass through.

\paragraph{Isotonic regression.} Also called monotonic regression, isotonic regression \citep{barlowstatistical} constrains the regression function to be non-decreasing/non-increasing. This is useful if one has prior knowledge of such monotonicity (\eg, the mean temperature of the climate is non-decreasing). A classic algorithm is pooling-adjacent-violators (PAV),
which optimizes the pooling of adjacent points that violate the monotonicity constraint
\citep{barlowstatistical}. This initial algorithm has been generalized and incorporated into convex programming frameworks (see \citet{mair2009isotone} for an excellent summary of the history of isotonic regression and its extensions). Our work builds off of the generalized PAV (GPAV) algorithm of \citet{yu2016exact}.

\section{Method}
Given inputs $\{\xb_i \in \mathcal{X} \}_{i=1}^n$,  
our goal is to learn a latent binary decision tree $\cT$ with maximum depth $D$. This tree sends each input $\xb$ through branching nodes to a specific leaf node  in the tree. Specifically, all branching nodes $\cT_B \subset \cT$ split an input $\xb$ by forcing it to go to its left child if
$s_\theta(\xb) < 0$,
and right otherwise. There are three key parts of the tree that need to be
identified: 1. The \emph{continuous} parameters $\theta_t \in \mathbb{R}^d$ that
describe how $s_{\theta_t}(.)$ splits inputs at every node $t$; 2. The
\emph{discrete} paths $\zb$ made by each input $\xb$ through the tree; 3. The
\emph{discrete} choice $a_t$ of whether a node $t$ should be active or pruned,
i.e. inputs should reach/traverse it or not. We next describe how to represent each of these.


\setlength{\textfloatsep}{8pt}
\begin{figure}
    \centering
    {\includegraphics[width=\columnwidth]{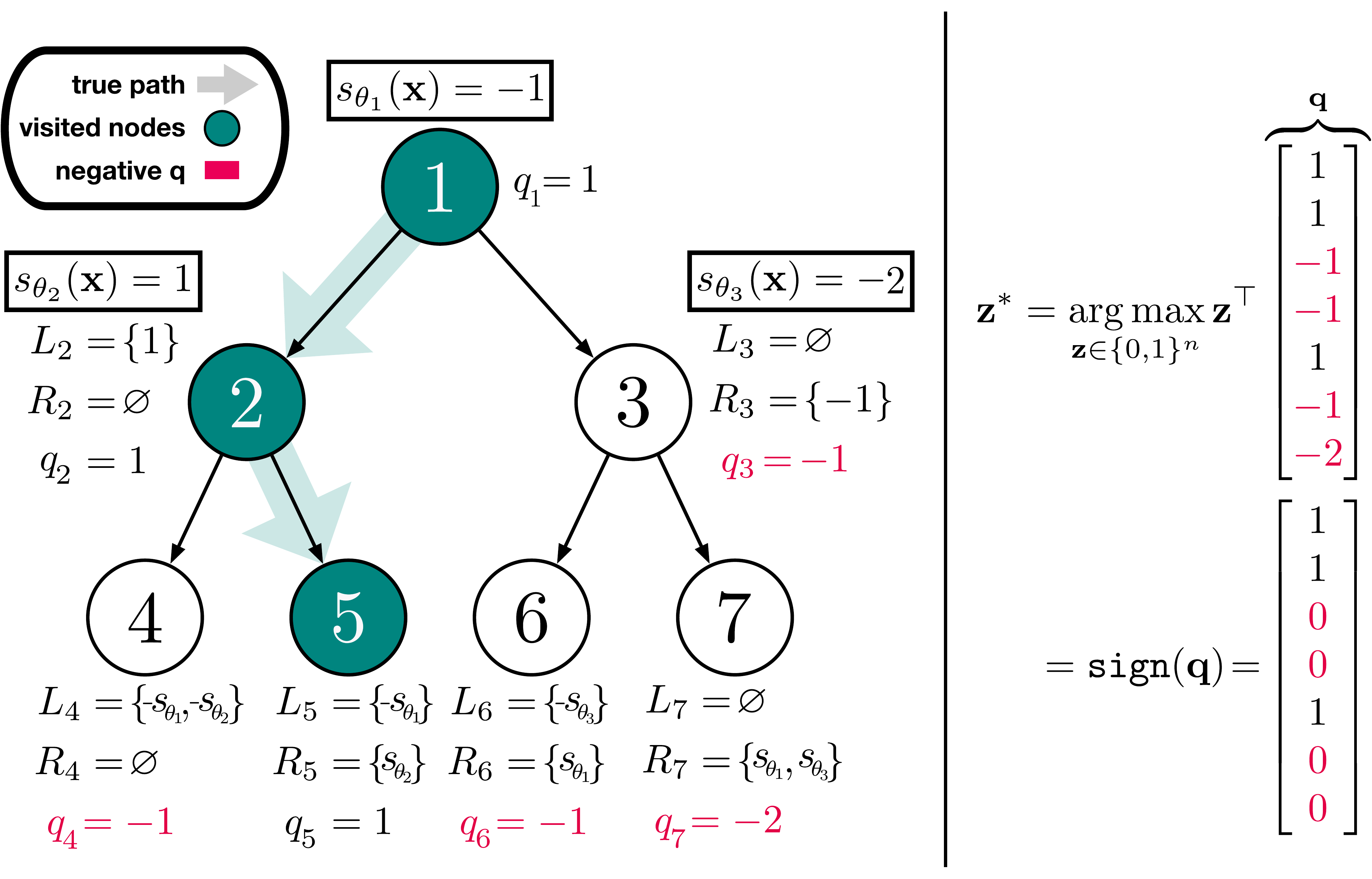}
    }
    \caption{A depiction of equation~\eqref{eq:lp_no_prune} for optimizing tree traversals given the tree parameters $\bm{\theta}$.}
    \label{fig:eq_1}
\end{figure}
\subsection{Tree Traversal \& Pruning Programs}
Let the splitting functions of a complete tree $\{s_{\theta_t}: \mathcal{X} \to \mathbb{R}\}_{t=1}^{|\cT_B|}$ be fixed. The path through which a point $\xb$ traverses the tree can be encoded in
a vector $\zb \in \{0,1\}^{|\cT|}$, where each component $z_t$ indicates whether $\xb$ reaches node $t \in \cT$.
The following integer linear program (ILP) is maximized by a vector $\zb$ that describes \emph{tree traversal}:
\begin{align}
    \max_{\zb} \;\;& \zb^\top \qb \label{eq:lp_no_prune} \\
    \mbox{s.t. } \forall t \in \cT \setminus \{ \text{root} \},\;\;& q_{t} = \min \{ R_{t} \cup L_{t} \} \nonumber \\
    & R_{t} = \{ s_{\theta_{t'}}(\xb) \;\;\;\!\mid \;\; \forall t' \in A_R(t)\} \nonumber \\ 
    & L_{t} = \{ -s_{\theta_{t'}}(\xb) \mid \;\; \forall t' \in A_L(t) \} \nonumber \\ 
    & z_{t} \in \{0, 1\}, \nonumber
\end{align}
where we fix $q_1\!=\!z_1\!=\!1$ (i.e., the root). Here $A_L(t)$ is the set of ancestors of node $t$ whose left child must be followed to get to $t$, and similarly for $A_R(t)$. 
The quantities $q_{t}$ (where $\qb \!\in\! \mathbb{R}^{|\cT|}$ is the tree-vectorized version of $q_{t}$) describe the `reward' of sending $\xb$ to node $t$. This is based on how well the splitting functions leading to $t$ are satisfied ($q_{t}$ is positive if all splitting functions are satisfied and negative otherwise). This is visualized in Figure~\ref{fig:eq_1} and formalized below.

\begin{lemma}\label{lemma:tree}
For an input $\xb$ and binary tree $\cT$ with splits $\{s_{\theta_{t}}, \forall t \in \cT_B\}$, the ILP in eq.~\eqref{eq:lp_no_prune} describes a valid tree traversal $\zb$ for $\xb$ (i.e., $z_t=1$ for any $t \in \cT$ if $\xb$ reaches $t$).
\end{lemma}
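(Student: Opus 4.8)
The plan is to exploit the fact that the equality constraints in \eqref{eq:lp_no_prune} merely \emph{define} the reward vector $\qb$ and do not couple the entries of $\zb$: the only genuine constraints on the optimization variable are the box constraints $z_t \in \{0,1\}$ together with the fixed root $z_1 = 1$. Hence the objective $\zb^\top\qb = \sum_t z_t q_t$ separates across coordinates, and every maximizer takes $z_t^\star = 1$ when $q_t > 0$ and $z_t^\star = 0$ when $q_t < 0$ (coordinates with $q_t = 0$ do not affect the objective). Since $q_1 = 1 > 0$, forcing $z_1 = 1$ is consistent with this rule. The lemma therefore reduces to a single claim about the \emph{sign pattern} of $\qb$, namely that $q_t > 0$ if and only if $\xb$ reaches $t$.

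To establish this equivalence I would use the unrolled form of the reward. By definition $q_t = \min(R_t \cup L_t)$ with $R_t = \{ s_{\theta_{t'}}(\xb) : t' \in A_R(t)\}$ and $L_t = \{ -s_{\theta_{t'}}(\xb) : t' \in A_L(t)\}$, so $q_t > 0$ exactly when every element of $R_t \cup L_t$ is positive, i.e.\ when $s_{\theta_{t'}}(\xb) > 0$ for all right-ancestors $t' \in A_R(t)$ and $s_{\theta_{t'}}(\xb) < 0$ for all left-ancestors $t' \in A_L(t)$. On the other hand, unrolling the branching rule ($\xb$ goes left iff $s_{\theta_{t'}}(\xb) < 0$, right otherwise) shows that $\xb$ reaches $t$ precisely when at every ancestor it takes the branch pointing toward $t$, that is $s_{\theta_{t'}}(\xb) \ge 0$ for $t' \in A_R(t)$ and $s_{\theta_{t'}}(\xb) < 0$ for $t' \in A_L(t)$. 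These two conditions coincide, so $q_t > 0$ iff $\xb$ reaches $t$, and combined with the first paragraph this yields $z_t^\star = \mathbbm{1}[\xb \text{ reaches } t]$.

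Two points would then need care. First, I should confirm that $\zb^\star$ is a \emph{valid} traversal, i.e.\ its active set is a single root-to-leaf path rather than an arbitrary collection of nodes. Whenever $p$ is an ancestor of $t$ we have $R_p \cup L_p \subseteq R_t \cup L_t$, hence $q_t \le q_p$, so $\{t : q_t > 0\}$ is closed under taking parents; and at a reached internal node $p$ the two children $c$ have rewards $q_c = \min\big(q_p,\ \sigma_c\, s_{\theta_p}(\xb)\big)$ with $\sigma_c = \pm 1$ (reading $q_p = +\infty$ at the root), which are of opposite sign whenever $s_{\theta_p}(\xb) \neq 0$, the positive one lying along the selected branch. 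This guarantees connectedness. Second, the only real obstacle is the boundary $s_{\theta_{t'}}(\xb) = 0$ at a right-ancestor: there the minimum equals $0$, so the strict test $q_t > 0$ fails for a node that is nonetheless reached (since $s \ge 0$ routes right). I would dispose of this degenerate, measure-zero case either by assuming non-degenerate splits at $\xb$, or by adopting the tie-break $z_t = 1 \Leftrightarrow q_t \ge 0$ matching the convention that $s \ge 0$ sends $\xb$ right; under either convention the optimizer of \eqref{eq:lp_no_prune} coincides exactly with the traversal indicator, completing the proof.
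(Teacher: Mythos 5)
Your proof is correct and follows essentially the same route as the paper's: both exploit separability of $\zb^\top\qb$ under the box constraints, reduce the lemma to the sign pattern of $\qb$ (namely $q_t > 0$ iff $\xb$ reaches $t$), and dispose of the degenerate boundary case $s_{\theta_{t'}}(\xb) = 0$ by a non-degeneracy assumption (the paper perturbs by a small $\epsilon$). Your per-node characterization via the unrolled $\min$, together with the parent-closedness observation $q_t \le q_{p}$, is a slightly more systematic packaging of the paper's leaf-first argument, but the underlying idea is identical.
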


\begin{proof}
Assume without loss of generality that $s_{\theta_{t}}(\xb) \neq 0$ for all $t \in \cT$ (if this does not hold for some node $\tilde{t}$ simply add a small constant $\epsilon$ to $s_{\theta_{\tilde{t}}}(\xb)$). Recall that for any non-leaf node $t'$ in the fixed binary tree $\cT$, a point $\xb$ is sent to the left child if $s_{\theta_{t'}}(\xb) < 0$ and to the right child otherwise. Following these splits from the root, every point $\xb$ reaches a unique leaf node $f_{\xb} \in \cF$, where $\cF$ is the set of all leaf nodes.
Notice first that both $\min L_{f_{\xb}} > 0$ and $ \min R_{f_{\xb}} > 0$.\footnote{We use the convention that $\min \varnothing = \infty$ (i.e., for paths that go down the rightmost or leftmost parts of the tree).}
This is because $\forall t' \in A_L(f_{\xb})$ it is the case that $s_{\theta_{t'}}(\xb) < 0$, and  $\forall t' \in A_R(f_{\xb})$ we have that $s_{\theta_{t'}}(\xb) > 0$.
This is due to the definition of $f_{\xb}$ as the leaf node that $\xb$ reaches (i.e., it traverses to $f_{\xb}$ by following the aforementioned splitting rules).
Further, for any other leaf $f \in \cF \setminus \{f_{\xb}\}$ we have that either $\min L_{f} < 0$ or $\min R_{f} < 0$. This is because there exists a node $t'$ on the path to $f$ (and so $t'$ is either in $A_L(f)$ or $A_R(f)$) that does not agree with the splitting rules. For this node it is either the case that (i) $s_{\theta_{t'}}(\xb) > 0$ and $t' \in A_L(f)$, or that (ii) $s_{\theta_{t'}}(\xb) < 0$ and $t' \in A_R(f)$. In the first case $\min L_f < 0$, in the second $\min R_f < 0$. Therefore we have that $q_f < 0$ for all $f \in \cF \setminus \{f_{\xb}\}$, and $q_{f_{\xb}} > 0$. In order to maximize the objective $\zb^\top \qb$ we will have that $z_{f_{\xb}} = 1$ and $z_f = 0$. Finally, let $\mathcal{N}_{f_{\xb}}$ be the set of non-leaf nodes visited by $\xb$ on its path to $f_{\xb}$. Now notice that the above argument applied to leaf nodes can be applied to nodes at any level of the tree: $q_t > 0$ for $ t \in \mathcal{N}_{f_{\xb}}$ while $q_{t'} < 0$ for $t' \in \cT \setminus \mathcal{N}_{f_{\xb}} \cup f_{\xb}$. Therefore $\zb_{\mathcal{N}_{f_{\xb}} \cup f_{\xb}} = 1$ and $\zb_{\cT \setminus \mathcal{N}_{f_{\xb}} \cup f_{\xb}} = 0$, completing the proof.
\end{proof}

This solution is unique so long as $s_{\theta_{t}}(\xb_i)
\!\neq\! 0$ for all $t \in \cT, i \in \{1, \ldots, n\}$ (\ie, $s_{\theta_{t}}(\xb_i) \!=\! 0$ means equal preference to split $\xb_i$ left or right). Further the integer constraint on $z_{it}$ can be relaxed to an interval constraint $z_{it} \in [0, 1]$ w.l.o.g. This is because if $s_{\theta_{t}}(\xb_i)
\neq 0$ then $z_t=0$ if and only if $q_t < 0$ and $z_t=1$ when $q_t > 0$ (and $q_t \neq 0$).


While the above program works for any fixed tree, we would like to be able to also learn the structure of the tree itself.
We do so by learning a pruning optimization variable $a_t \in \{0,1\}$, indicating if node $t \in \cT$ is active (if $1$) or pruned (if $0$). 
We adapt eq.~\eqref{eq:lp_no_prune} into the following pruning-aware mixed integer program (MIP) considering all inputs $\{\xb_i\}_{i=1}^n$:
\begin{align}
    \max_{\zb_1, \ldots, \zb_n, \ab} \;\;& \sum_{i=1}^n \zb_i^\top \qb_i - \frac{\lambda}{2} \|\ab\|^2 \label{eq:ourilp} \\ 
    \mbox{s.t. } \forall i \in [n],\;\;& a_t \leq a_{p(t)}, \;\;\; \forall t \in \cT \setminus \{ \text{root} \}\nonumber \\
    & z_{it} \leq a_t \nonumber \\
    & z_{it} \in [0, 1], a_t \in \{0,1\} \nonumber
\end{align}
with $\|\cdot\|$ denoting the $l_2$ norm.
We remove the first three constraints in eq.~\eqref{eq:lp_no_prune} as they are a deterministic computation independent of $\zb_1,\ldots,\zb_n,\ab$. We denote by $p(t)$ the parent of node $t$. The new constraints $a_t \leq a_{p(t)}$ ensure that child nodes $t$ are pruned if the parent node $p(t)$ is pruned, hence enforcing the tree hierarchy, while the other new constraint $z_{it} \leq a_t$ ensures that no point $\xb_i$ can reach node $t$ if node $t$ is pruned, resulting in losing the associated reward $q_{it}$. 
Overall, the problem consists in a trade-off, controlled by hyper-parameter $\lambda \in \mathbb{R}$, between minimizing the number of active nodes through the pruning regularization term (since $a_t \in \{0,1\}, \|\ab\|^2 = \sum_t a_t^2 = \sum_t \mathbb{I}(a_t = 1)$) while maximizing the reward from points traversing the nodes.

\subsection{Learning Tree Parameters}

We now consider the problem of learning the splitting parameters $\theta_t$.
A natural approach would be to do so in the MIP itself, as in the optimal tree literature. However, this would severely restrict allowed splitting functions as even linear splitting functions can only practically run on at most thousands of training inputs \citep{bertsimas2017optimal}. Instead, we propose to learn $s_{\theta_{t}}$ via gradient descent.

To do so, we must be able to compute the partial derivatives
$\partial\ab/\partial\qb$ and $\partial\zb/\partial\qb$. However, the solutions of eq.~\eqref{eq:ourilp} are piecewise-constant, leading to null gradients.
To avoid this, we relax the integer constraint on $\ab$ to the interval $[0,1]$ and add quadratic regularization $-\nicefrac{1}{4}\sum_i (\| \zb_i \|^2 +\| 1 - \zb_i \|^2)$.
The regularization term for $\zb$ is symmetric so that it shrinks solutions toward even left-right splits
(see Figure~\ref{fig:split-z} for a visualization, and the supplementary for a more detailed justification with an empirical evaluation of the approximation gap). Rearranging and negating the objective yields
\begin{figure}
\centering
\tikzstyle{zir} = [CGreen, ultra thick]
\tikzstyle{zil} = [CMagenta, ultra thick, dashed]
\begin{tikzpicture}[scale = 0.75]
  \draw[->] (-2, 0) -- (2.6, 0) node[below] {\small $s_{\theta_t}(x_i)$};
  \draw[->] (0, -0.2) -- (0, 1.7);
  \foreach \x in {-1.5,-1,...,1.5}
 	\draw (\x,1pt) -- (\x,-1pt)
	node[anchor=north] {\scriptsize \x};
  \foreach \y in {0.5,1}
 	\draw (1pt,\y) -- (-1pt,\y) 
 		node[anchor=east] {\scriptsize \y}; 
 \draw[domain=-2:0, variable=\x,zir] plot ({\x}, {0});
  \draw[domain=0:2, variable=\x,zir] plot ({\x}, {1}) node[above] {$z_{ir}$};
  \draw[domain=-2:0, variable=\x,zil] plot ({\x}, {1});
  \draw[domain=0:2, variable=\x, zil] plot ({\x}, {0}) node[above] {$z_{il}$};
\end{tikzpicture}
\hfill
\begin{tikzpicture}[scale = 0.75]
  \draw[->] (-2, 0) -- (2.6, 0) node[below] {\small $s_{\theta_t}(x_i)$};
  \draw[->] (0, -0.2) -- (0, 1.7);
  \foreach \x in {-1.5,-1,...,1.5}
 	\draw (\x,1pt) -- (\x,-1pt)
	node[anchor=north] {\scriptsize \x};
  \foreach \y in {0.5,1}
 	\draw (1pt,\y) -- (-1pt,\y) 
 		node[anchor=east] {\scriptsize \y}; 
 \draw[domain=-2:-0.5, variable=\x, zir] plot ({\x}, {0});
 \draw[domain=-0.5:0.5, variable=\x, zir] plot ({\x}, {\x + 0.5});
  \draw[domain=0.5:2, variable=\x, zir] plot ({\x}, {1}) node[above] {$z_{ir}$};
  \draw[domain=-2:-0.5, variable=\x, zil] plot ({\x}, {1});
 \draw[domain=-0.5:0.5, variable=\x, zil] plot ({\x}, {-\x + 0.5});
  \draw[domain=0.5:2, variable=\x, zil] plot ({\x}, {0}) node[above] {$z_{il}$};
\end{tikzpicture}
\caption{\label{fig:split-z}Routing of point $\xb_i$ at node $t$, without pruning, without (left) or with (right) quadratic relaxation. Depending on the decision split $s_{\theta_t}(\cdot)$, $\xb_i$ reaches node $t$'s left child $l$ (right child $r$ respectively) if $z_{il} > 0$ ($z_{ir} > 0$).
The relaxation makes $\zb_i$ continuous and encourages points to have a margin ($| s_{\theta_t}| > 0.5$).}
\end{figure}
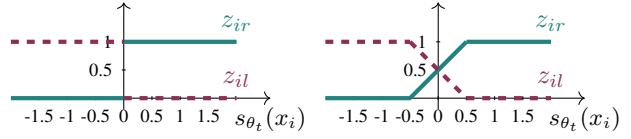

\begin{align}
   \mathrm{T}_{\lambda}(\qb_1, \ldots, \qb_n) =  \label{eq:qp_prune} \\ \argmin_{\zb_1, \ldots, \zb_n, \ab} \;\;& \nicefrac{\lambda}{2} \| \ab \|^2 + \hlf\sum_{i=1}^n \| \zb_i - \qb_i -\nicefrac{1}{2} \|^2 \nonumber
 \\
    \mbox{s.t. } \forall i \in [n],\;\;& a_t \leq a_{p(t)}, \;\;\; \forall t \in \cT
\setminus \{\text{root}\}  \nonumber \\
    & z_{it} \leq a_t \nonumber \\
    & z_{it} \in [0, 1], a_t \in [0,1]. \nonumber
\end{align}

The regularization makes the objective strongly convex. It follows from convex duality that $\mathrm{T}_{\lambda}$ is Lipschitz continuous
\citep[Corollary 3.5.11]{zalinescu}.
By Rademacher's theorem \citep[Theorem 9.1.2]{borweinlewis}, $\mathrm{T}_{\lambda}$ is thus differentiable almost everywhere.
Generic methods such as OptNet~\citep{optnet} could be used to compute the
solution and gradients. However,
by using the tree structure of the constraints, we next derive an efficient specialized algorithm.
The main insight, shown below, reframes pruned binary tree learning
as isotonic optimization.

\begin{proposition}\label{prop:iso}
Let $\mathcal{C}= \big\{ \ab \in \mathbb{R}^{|\mathcal{T}|} :
a_t \leq a_{p(t)} ~~\text{for all}~~ t \in \mathcal{T} \setminus \{ \text{root}
\} \big\}.$
Consider $\ab^\star$ the optimal value of
\begin{equation}\label{eq:nested_qp_prune}
\argmin_{\ab\in \mathcal{C} \cap [0,1]^{|\mathcal{T}|}}
\sum_{t \in \mathcal{T}} \Big( \frac{\lambda}{2} a_t^2
+\!\!\sum_{i: a_t < q_{it} + \hlf}\!\!{(a_t - q_{it} - \hlf)}^2 \Big)\,.
\end{equation}
Define%
\footnote{Here $\operatorname{Proj}_{\mathcal{S}}(x)$ is the projection of $x$ onto set $\mathcal{S}$. If $\mathcal{S}$ are box constraints, projection amounts to clipping.}
$\zb_i^\star$ such that $z^\star_{it} = \operatorname{Proj}_{[0, a_t^\star]}(q_{it} + \hlf)$.
Then, $\mathrm{T}_{\lambda}(\qb_1, \ldots, \qb_n) = \zb^\star_1, \ldots, \zb^\star_n, \ab^\star$.
\end{proposition}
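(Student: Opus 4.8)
The plan is to eliminate the routing variables $\zb_1,\dots,\zb_n$ by partial minimization, reducing $\mathrm{T}_\lambda$ to an optimization over the pruning variables $\ab$ alone. First I would note that the objective in eq.~\eqref{eq:qp_prune} is strongly convex (the quadratic terms in both $\zb$ and $\ab$ have positive curvature) and the feasible set is a convex polytope, so the minimizer is unique; it therefore suffices to minimize in the order ``$\zb$ given $\ab$, then $\ab$.'' Crucially, for fixed $\ab$ the objective and the constraints decouple completely across the index $(i,t)$: the term $\hlf\|\zb_i - \qb_i - \hlf\|^2$ splits as $\hlf\sum_t (z_{it} - q_{it} - \hlf)^2$, and the only constraints touching $z_{it}$ are $0 \le z_{it} \le 1$ and $z_{it} \le a_t$.

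Second, I would solve each scalar inner problem. Since $a_t \in [0,1]$, the two upper bounds $z_{it}\le 1$ and $z_{it}\le a_t$ collapse to $z_{it}\le a_t$, so the feasible interval is exactly $[0,a_t]$. Minimizing $(z_{it}-q_{it}-\hlf)^2$ over this interval is a one-dimensional Euclidean projection, giving $z_{it}^\star(\ab) = \operatorname{Proj}_{[0,a_t]}(q_{it}+\hlf)$. Evaluated at the optimal $\ab^\star$ this is precisely the $\zb^\star$ in the statement, so the only remaining task is to show that the $\ab$ obtained by minimizing the resulting value function is the $\ab^\star$ of eq.~\eqref{eq:nested_qp_prune}.

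Third, I would substitute $z_{it}^\star(\ab)$ back and simplify the value function $\phi(\ab) = \frac{\lambda}{2}\|\ab\|^2 + \hlf\sum_{i,t}\big(\operatorname{Proj}_{[0,a_t]}(q_{it}+\hlf) - q_{it} - \hlf\big)^2$. Writing $v_{it} = q_{it}+\hlf$ and splitting the projection into its three regimes, the summand equals $v_{it}^2$ when $v_{it}<0$, equals $0$ when $0\le v_{it}\le a_t$, and equals $(a_t - v_{it})^2$ exactly when $a_t < v_{it}$. The first case is independent of $\ab$ (a constant: since $a_t\ge 0 > v_{it}$ there, the upper clip is never active) and the second contributes nothing, so the only $\ab$-dependent contribution is $\sum_t \sum_{i:\,a_t < q_{it}+\hlf}(a_t - q_{it}-\hlf)^2$, which one recognizes as the convex squared one-sided penalty $\sum_i \big[(q_{it}+\hlf - a_t)_+\big]^2$. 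Together with $\frac{\lambda}{2}\|\ab\|^2$ and the inherited constraints $a_t \le a_{p(t)}$, $a_t\in[0,1]$ (\ie $\ab \in \mathcal{C}\cap[0,1]^{|\mathcal{T}|}$), minimizing $\phi$ over $\ab$ coincides, modulo an $\ab$-independent additive constant, with the objective of eq.~\eqref{eq:nested_qp_prune}; since each per-node term is convex and the order constraints $a_t\le a_{p(t)}$ form a tree partial order, this is a tree-structured isotonic optimization.

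I expect the main obstacle to be the careful bookkeeping in this substitution: one must verify that $\{i : a_t < q_{it}+\hlf\}$ is precisely the set of $(i,t)$ for which the projection is clipped from above, and that every other summand is either identically zero or constant in $\ab$, so that no stray $\ab$-dependence leaks out and the data-adaptive index set appears exactly as written. Once $\phi$ is identified with eq.~\eqref{eq:nested_qp_prune}, strong convexity gives a unique $\ab^\star$, and plugging it into $z_{it}^\star(\cdot)$ yields the unique global minimizer $(\zb_1^\star,\dots,\zb_n^\star,\ab^\star)$ of $\mathrm{T}_\lambda$, completing the argument.
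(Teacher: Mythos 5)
Your proposal is correct and takes essentially the same route as the paper's proof: push the minimization over $\zb$ inside by separability of the objective and constraints, solve each scalar box-constrained problem to get $z^\star_{it} = \operatorname{Proj}_{[0,a_t]}(q_{it}+\hlf)$, and then argue via the three-regime case analysis that the resulting value function agrees with the objective of eq.~\eqref{eq:nested_qp_prune} up to an $\ab$-independent constant; your added remarks (uniqueness from strong convexity, the collapse of the bounds $z_{it}\le 1$ and $z_{it}\le a_t$ into $[0,a_t]$) only make explicit what the paper leaves implicit. One bookkeeping caveat, inherited from the paper itself: the substitution actually produces data terms $\hlf(a_t - q_{it} - \hlf)^2$, consistent with $g_t$ in eq.~\eqref{eq:nested_qp_prune_isotonic} and with Proposition~\ref{prop:scalar_subproblem}, whereas eq.~\eqref{eq:nested_qp_prune} (and your third step, which copies it) omits the factor $\hlf$, so the two objectives differ by a constant only after restoring that factor.
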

\begin{proof}
The constraints and objective of eq.~\eqref{eq:qp_prune} are separable,
so we may push the minimization \wrt $\zb$ inside the objective:
\begin{equation}
\argmin_{\ab \in \mathcal{C} \cap [0,1]^{|\mathcal{T}|}} \! \frac{\lambda}{2} \| \ab \|^2
\!\!+\!\!\sum_{t \in \mathcal{T}}\!\sum_{i=1}^n
\min_{0 \leq z_{it} \leq a_t} \hlf{({z}_{it} - {q}_{it} - \hlf)}^2\,.
\end{equation}
Each inner minimization
$\min_{0 \leq z_{it} \leq a_t} \hlf (z_{it} - q_{it} - \hlf)^2$
is a one-dimensional projection onto box constraints, with solution
$z^\star_{it} = \operatorname{Proj}_{[0, a_t]}(q_{it} + \hlf)$.
We use this to eliminate $\zb$,
noting that each term $\hlf(z^\star_{it} - q_{it} - \hlf)^2$ yields
\begin{equation}
\begin{cases}
\hlf (q_{it} + \hlf)^2, & q_{it} < - \hlf\\
0, & - \hlf \leq q_{it} < a_t - \hlf \\
\hlf(a_t - q_{it} - \hlf)^2, & q_{it} \geq a_t - \hlf.\\
\end{cases}
\end{equation}
The first two branches are constants \wrt $a_t$
The problems in eq.~\eqref{eq:qp_prune} and eq.~\eqref{eq:nested_qp_prune}
differ by a constant and have the same constraints,
therefore they have the same solutions.
\end{proof}

\begin{figure}
    \centering
    \includegraphics[width=.42\textwidth]{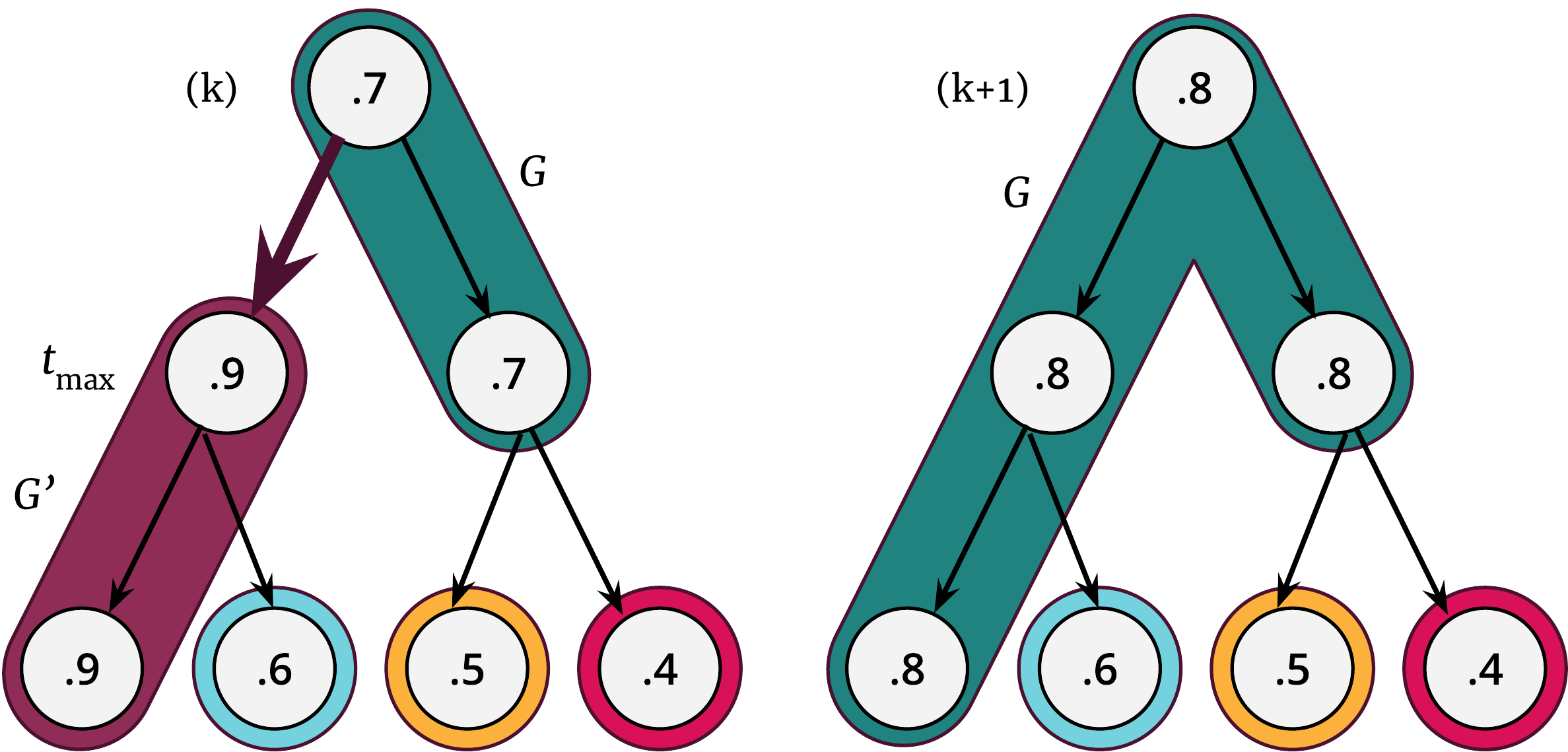}
    \caption{One step in the tree isotonic optimization process from Algorithm~\ref{alg}.
    Shaded tubes depict the partition groups $\mathcal{G}$, and the emphasized
arc is the largest among the violated inequalities. The group of $t_{\max}$
    merges with the group of its parent,
    and the new value for the new, larger group $a_G$ is computed.}
    \label{fig:coloring}
\end{figure}

\begin{algorithm}[t]\small
\caption{Pruning via isotonic optimization\label{alg}}
\begin{algorithmic}
\STATE initial partition
$\mathcal{G} \leftarrow \big\{ \{1\}, \{2\}, \cdots \}
\subset 2^\mathcal{T}$
\FOR {$G \in \mathcal{G}$}
\STATE $a_G \leftarrow \argmin_a \sum_{t \in G} g_t(a)$
\hfill\COMMENT{Eq.~\eqref{eq:nested_qp_prune_isotonic}, Prop.~\ref{prop:scalar_subproblem}}
\ENDFOR
\REPEAT
\STATE $t_\text{max} \leftarrow \argmax_t \{a_t: a_t > a_{p(t)}\}$
\STATE merge $G \leftarrow G \cup G'$ where $G' \ni t_{\max}$ and $G \ni p(t_{\max})$.
\STATE update  $a_G \leftarrow \argmin_a \sum_{t \in G} g_t(a)$
\hfill\COMMENT{Eq.~\eqref{eq:nested_qp_prune_isotonic}, Prop.~\ref{prop:scalar_subproblem}}
\UNTIL{ no more violations $a_t > a_{p(t)}$ exist}
\end{algorithmic}
\end{algorithm}

\paragraph{Efficiently inducing trees as isotonic optimization.}

From Proposition~\ref{prop:iso},
notice that eq.~\eqref{eq:nested_qp_prune} is an instance of tree-structured isotonic
optimization:
the objective decomposes over \emph{nodes}, and the inequality constraints
correspond to edges in a rooted tree:
\begin{align}\label{eq:nested_qp_prune_isotonic}
&\argmin_{\ab \in \mathcal{C}} \sum_{t \in \mathcal{T}} g_t(a_t)\,, ~~~~\text{where} \\
 g_t(a_t) =&
\frac{\lambda}{2} a_t^2
+\!\sum_{i: a_t \leq q_{it} + \hlf}\!\!\hlf {(a_t - q_{it} - \hlf)}^2  + \iota_{[0,1]}(a_t). \nonumber
\end{align}
where $\iota_{[0,1]}(a_t)\!=\!\infty$ if $a_t \notin [0,1]$ and $0$ otherwise.
This problem can be solved by a generalized \emph{pool adjacent violators} (PAV)
algorithm: Obtain a tentative solution by ignoring the constraints, then
iteratively remove violating edges $a_t > a_{p(t)}$ by \emph{pooling together}
the nodes at the end points.
Figure~\ref{fig:coloring} provides an illustrative example of the procedure.
At the optimum, the nodes are organized into a
partition $\mathcal{G} \subset 2^\mathcal{T}$, such that if two nodes $t, t'$
are in the same group $G \in \mathcal{G}$, then $a_t = a_{t'} \coloneqq a_G$.

When the inequality constraints are the edges of a rooted tree, as is the case
here, the PAV algorithm finds the optimal solution in at most $|\mathcal{T}|$ steps,
where each involves updating the $a_G$ value for a newly-pooled group by solving
a one-dimensional subproblem of the form \citep{yu2016exact}\footnote{Compared to \citet{yu2016exact}, our tree inequalities are in the opposite
direction.  This is equivalent to a sign flip of parameter $\ab$, \ie,
to selecting the \emph{maximum} violator rather than the minimum one at each
iteration.}
\begin{equation}\label{eq:scalar_subproblem}
a_G = \argmin_{a \in \bR} \sum_{t \in G} g_t(a)\,,
\end{equation}
resulting in Algorithm~\ref{alg}.
It remains to show how to solve eq.~\eqref{eq:scalar_subproblem}. The next
result, proved in the supplementary, gives an exact and efficient solution,
with an algorithm that requires finding the nodes with highest
$q_{it}$ (\ie, the nodes which $\xb_i$ is most highly encouraged to traverse).

\begin{figure}
    \centering
    {\includegraphics[width=0.4\textwidth]{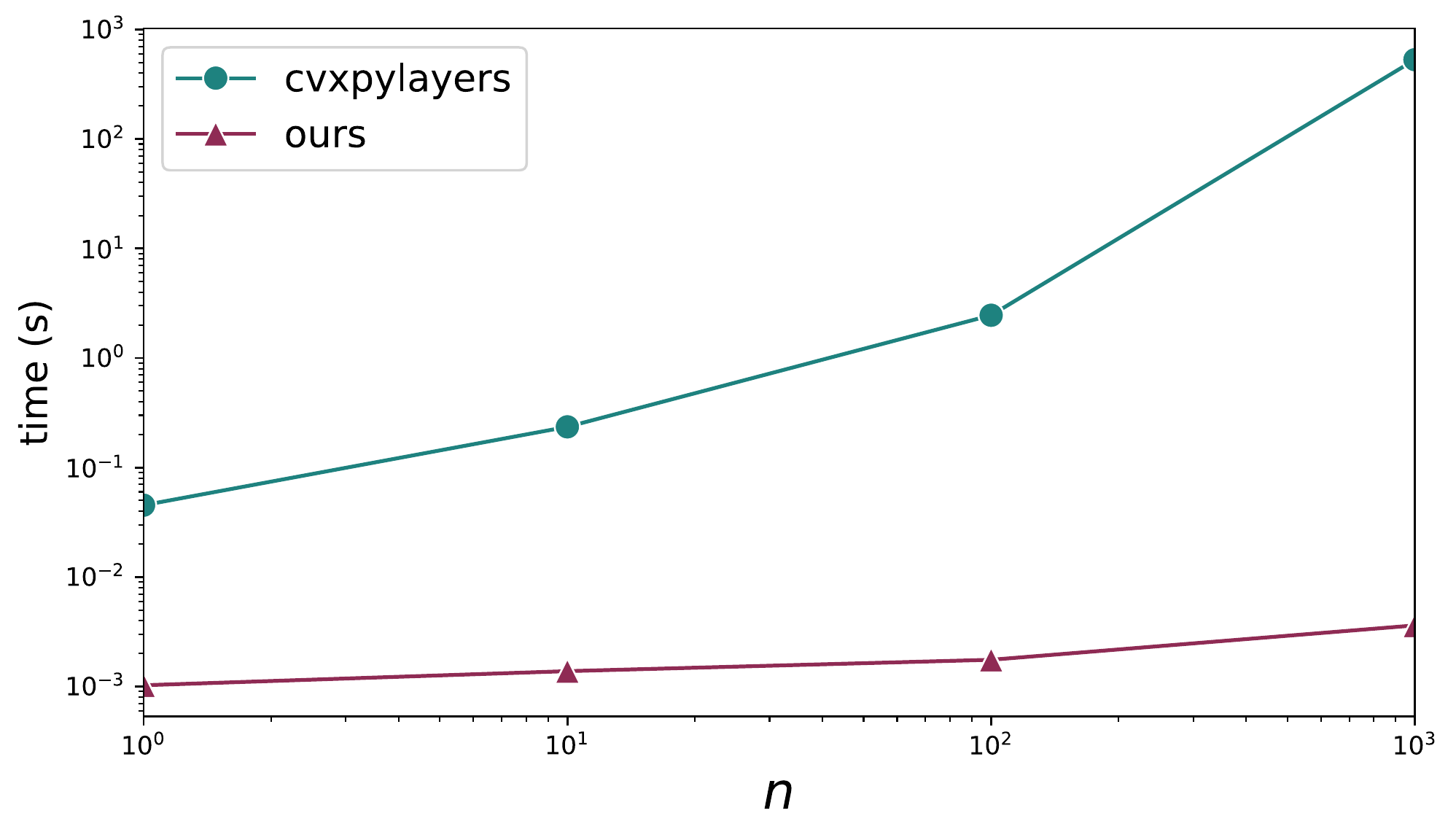}
    }
    \caption{Comparison of Algorithm~\ref{alg}'s running time (ours) with the running time of cvxpylayers~\citep{cvxpylayers2019} for tree depth $D \!=\! 3$, varying $n$.}
    \label{fig:time}
        \label{fig:lambda}
\end{figure}

\pagebreak
\begin{proposition}\label{prop:scalar_subproblem}%
The solution to the one-dimensional problem in
eq.~\eqref{eq:scalar_subproblem} for any $G$ is given by
\begin{align}\argmin_{a \in \bR} \sum_{t \in G} g_t(a) =
\operatorname{Proj}_{[0,1]}\big(a(k^\star)\big) \\ \text{where}\quad a(k^\star) \coloneqq
\frac{\sum_{(i,t) \in S(k^\star)} (q_{it} + \hlf)}{\lambda|G|+k^\star}, \nonumber
\end{align}
$S(k) = \{j^{(1)}, \ldots, j^{(k)}\}$ is the set of indices $j=(i,t) \in
\{1, \ldots, n\} \times G$ into
the $k$ highest values of $\qb$, \ie,
$q_{j^{(1)}} \geq
q_{j^{(2)}} \geq \ldots \geq
q_{j^{(m)}}$,
and $k^\star$ is the smallest $k$ satisfying
$a(k) > q_{j^{(k+1)}} + \hlf$.
\end{proposition}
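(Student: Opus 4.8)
My plan is to reduce the pooled subproblem to a one-dimensional convex optimization, characterize its unconstrained minimizer through the first-order condition, and then identify the stopping index $k^\star$ with a short monotonicity argument. First I would pool the nodes: since every $t \in G$ shares the common value $a$, I collect all pairs $j=(i,t) \in \{1,\dots,n\} \times G$ and write $v_j \coloneqq q_{it} + \hlf$, so that the objective of eq.~\eqref{eq:scalar_subproblem} becomes $f(a) = \tfrac{\lambda|G|}{2} a^2 + \sum_{j} \hlf \big(\max(0,\, v_j - a)\big)^2$ together with the box indicator $\iota_{[0,1]}(a)$. Each squared-hinge term is convex and $C^1$ (its derivative is the continuous map $a \mapsto (a-v_j)\,\mathbbm{1}[v_j > a]$), so $f$ is convex, indeed strongly convex for $\lambda>0$, and admits a unique minimizer. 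Because the feasible set is the interval $[0,1]$, the constrained minimizer is exactly $\operatorname{Proj}_{[0,1]}$ of the unconstrained one; this reduces the claim to computing the unconstrained minimizer $a^\circ$ and matching it to $a(k^\star)$.

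Next I would locate $a^\circ$ via the stationarity equation $f'(a)=0$. Differentiating gives $f'(a) = \lambda|G|\,a + \sum_{j:\, v_j \ge a}(a - v_j)$, a continuous, nondecreasing, piecewise-linear function. Sorting $v_{(1)} \ge \dots \ge v_{(m)}$ (i.e.\ the $q$-values in decreasing order), on the piece $a \in (v_{(k+1)}, v_{(k)})$ exactly the top $k$ terms are active and $f'(a) = (\lambda|G|+k)\,a - \sum_{j=1}^{k} v_{(j)}$, whose unique root is precisely $a(k) = \big(\sum_{j=1}^k v_{(j)}\big)/(\lambda|G|+k)$. Hence $a^\circ = a(k)$ for the unique index $k$ whose candidate value is \emph{self-consistent}, meaning the active set at $a(k)$ really is the top $k$: $v_{(k+1)} \le a(k) \le v_{(k)}$ (boundary values are harmless since a term with $v_j = a$ contributes $0$ to $f'$).

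The crux — and the step I expect to be the main obstacle — is to show that selecting $k^\star$ as the \emph{smallest} $k$ with $a(k) > v_{(k+1)}$ returns this self-consistent index, in particular that the upper condition $a(k^\star) \le v_{(k^\star)}$ holds automatically. I would prove this with the telescoping recurrence
\begin{equation*}
a(k+1) - a(k) = \frac{v_{(k+1)} - a(k)}{\lambda|G| + k + 1},
\end{equation*}
obtained by directly combining the two fractions. Since the denominator is positive, $\operatorname{sign}\big(a(k+1)-a(k)\big) = \operatorname{sign}\big(v_{(k+1)} - a(k)\big)$. By minimality of $k^\star$, every $k < k^\star$ fails the test, i.e.\ $a(k) \le v_{(k+1)}$, so the recurrence shows $a(1) \le \dots \le a(k^\star)$ and in particular $a(k^\star-1) \le v_{(k^\star)}$. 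Writing $a(k^\star) = (1-\delta)\,a(k^\star-1) + \delta\, v_{(k^\star)}$ with $\delta = 1/(\lambda|G|+k^\star) \in (0,1]$ exhibits $a(k^\star)$ as a convex combination of $a(k^\star-1)$ and $v_{(k^\star)}$, both of which are $\le v_{(k^\star)}$; hence $a(k^\star) \le v_{(k^\star)}$. Combined with $a(k^\star) > v_{(k^\star+1)}$ from the definition, the index $k^\star$ is self-consistent, so $a^\circ = a(k^\star)$ and the result follows after the projection step. I would finally dispose of boundary cases with the conventions $a(0)=0$ and $v_{(m+1)} = -\infty$, which guarantee both existence of $k^\star$ and correctness when all $v_j$ are negative (where the projection clips $a^\circ$ to $0$).
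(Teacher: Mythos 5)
Your proof is correct, and it follows the same overall skeleton as the paper's: reduce the pooled problem to counting the active hinge terms (which, by the ordering of the $q$-values, must form a top-$k$ set), minimize the restricted objective $J_k$ to obtain the candidate $a(k) = \big(\sum_{(i,t) \in S(k)} (q_{it}+\hlf)\big)/(\lambda|G|+k)$ from a first-order condition, select the smallest $k$ passing the threshold test, and fold in the box constraint by clipping the one-dimensional minimizer. The genuine difference lies in how the selection of $k^\star$ is justified. The paper argues via monotonicity of candidate objective values, $J_1\big(a(1)\big) \le J_1\big(a(2)\big) \le J_2\big(a(2)\big) \le \cdots$, concludes that $k$ should be ``as small as possible,'' and then asserts that the count condition $|\{(i,t) : a(k) \le q_{it}+\hlf\}| = k$ is equivalent to the one-sided test $a(k) > q_{j^{(k+1)}}+\hlf$; the nontrivial direction of that equivalence---that the smallest $k$ passing the one-sided test also satisfies the upper condition $a(k) \le q_{j^{(k)}}+\hlf$, which is what makes $a(k)$ a stationary point of the true piecewise objective rather than merely of the surrogate $J_k$---is stated but not proved. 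Your argument supplies exactly this piece: the recurrence
\begin{equation*}
a(k+1) - a(k) = \frac{v_{(k+1)} - a(k)}{\lambda|G|+k+1}, \qquad v_j \coloneqq q_{it}+\hlf,
\end{equation*}
shows the candidates are nondecreasing up to $k^\star$, and the convex-combination identity $a(k^\star) = (1-\delta)\,a(k^\star-1) + \delta\, v_{(k^\star)}$ then gives $a(k^\star) \le v_{(k^\star)}$ automatically, so $k^\star$ is self-consistent and $a(k^\star)$ is the unique root of the nondecreasing, piecewise-linear derivative $f'$. You also treat the tie case $v_j = a$ and the degenerate all-negative case ($k^\star = 0$, rescued by clipping) explicitly, which the paper leaves implicit. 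In short, the paper's route buys brevity and the greedy intuition; yours is the same construction but with a complete optimality verification, and it would strengthen the paper's proof if substituted in.
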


Figure~\ref{fig:time} compares our algorithm to a generic differentiable solver (details and additional comparisons in supplementary material).

\paragraph{Backward pass and efficient implementation details.}
Algorithm~\ref{alg} is a sequence of differentiable operations that can be
implemented \emph{as is} in automatic differentiation frameworks. However, because of the prominent loops and indexing operations, we opt for a low-level implementation as a \texttt{C++} extension.
Since the $\qb$ values are constant \wrt $\ab$, we only need to sort them
once as preprocessing, resulting in a overall time complexity of $\mathcal{O}(n |\cT| \log (n |\cT|))$ and space complexity of $\mathcal{O}(n |\cT|)$.
For the backward pass, rather than relying on automatic
differentiation, we make two remarks about the form of $\ab$.
Firstly, its elements are organized in groups, \ie, $a_t = a_t' = a_G$ for $\{t,
t'\} \subset G$. Secondly, the value $a_G$ inside each group
depends only on  the optimal support set $S^\star_G \coloneqq S(k^\star)$ as
defined for each subproblem  by Proposition~\ref{prop:scalar_subproblem}.
Therefore, in the forward pass, we must store only the node-to-group mappings
and the sets $S^\star_G$. Then, if $G$ contains $t$,
\begin{equation*}
\pfrac{a^\star_t}{q_{it'}} = \begin{cases}
\frac{1}{\lambda |G| + k^\star},
& 0 < a^\star_t < 1,
~\text{and}~
(i,t') \in S^\star_G,\\
0,&\text{otherwise.}
\end{cases}
\end{equation*}
As $\mathrm{T}_{\lambda}$ is differentiable almost everywhere, these
expressions yield the unique Jacobian at all but a measure-zero set of
points, where they yield one of the Clarke generalized Jacobians
\citep{clarke_book}.
We then rely on automatic differentiation to propagate gradients from $\zb$ to $\qb$, and from $\qb$ to
the split parameters $\bm{\theta}$: since $\qb$ is defined elementwise via $\min$
functions, the gradient propagates through the minimizing path,
by Danskin's theorem \citep[Proposition B.25,][]{bertsekas-nonlin,danskin_theorem}.
 \subsection{The Overall Objective}

\begin{algorithm}[t]\small
\caption{\label{alg:learn}Learning with decision tree representations.}
\begin{algorithmic}
\STATE initialize neural network parameters $\bm{\phi},\bm{\theta}$
\REPEAT
\STATE sample batch $\{\xb_i\}_{i \in B}$
\STATE induce traversals:
\STATE $\{\zb_i\}_{i \in B}, \ab =
\mathrm{T}_\lambda\big(\{q_{\bm{\theta}}(\xb_i)\}\big)$
\hfill\COMMENT{Algorithm~\ref{alg}; differentiable}
\STATE update parameters using
$\nabla_{\{\bm{\theta},\bm{\phi}\}}
\ell(f_{\bm{\phi}} (\xb_i, \zb_i))$
\hfill\COMMENT{autograd}
\UNTIL{convergence or stopping criterion is met}
\end{algorithmic}
\end{algorithm}


We are now able to describe the overall optimization procedure that simultaneously learns tree parameters: (a) input traversals $\zb_1, \ldots, \zb_n$; (b) tree pruning $\ab$; and (c) split parameters $\bm{\theta}$.
Instead of making predictions using heuristic scores over the training points assigned to a leaf (e.g., majority class), we learn a prediction function $f_\phi(\zb, \xb)$ that minimizes an arbitrary loss $\ell(\cdot)$ as follows:
\begin{equation}
\begin{aligned}
    \min_{\bm{\theta},\bm{\phi}}\;\;& \sum_{i=1}^n \ell\big(f_\phi(\xb_i, \zb_i)\big) \label{eq:obj} \\
    \mbox{where}\;\;& \zb_1, \ldots, \zb_n, \ab := \mathrm{T}_{\lambda}\big(q_{\bm{\theta}}(\xb_1), \ldots, q_{\bm{\theta}}(\xb_n)\big)\,.
\end{aligned}
\end{equation}
This corresponds to embedding a decision tree as a layer of a deep neural network (using $\zb$ as intermediate representation) and optimizing the parameters $\bm{\theta}$ and $\bm{\phi}$ of the whole model by back-propagation.
In practice, we perform mini-batch updates for efficient training;
the procedure is sketched in Algorithm~\ref{alg:learn}.
Here we define $q_{\theta}(\xb_i) := \qb_i$ to make explicit the dependence of $\qb_i$ on $\bm{\theta}$.

\section{Experiments}
In this section we showcase our method on both: (a) \emph{Classification/Regression} for tabular data, where tree-based models have been demonstrated to have superior performance over MLPs \citep{popov2019neural}; and (b) \emph{Hierarchical clustering} on unsupervised data. Our experiments demonstrate that our method leads to predictors that are competitive with state-of-the-art tree-based approaches, scaling better with the size of datasets and generalizing to many tasks.
Further we visualize the trees learned by our method and how sparsity is easily adjusted by tuning the hyper-parameter $\lambda$.

\paragraph{Architecture details.}
We use a linear function or a multi-layer perceptron ($L$ fully-connected layers
with ELU activation~\citep{clevert2015fast} and dropout) for $f_\phi(\cdot)$ and
choose between linear or linear followed by ELU splitting functions $s_\theta(\cdot)$ (we limit
the search for simplicity, there are no restrictions
except differentiability).





\subsection{Supervised Learning on Tabular Datasets}
\label{sec:tabular}
Our first set of experiments is on supervised learning with heterogeneous tabular datasets, where we consider both regression and binary classification tasks.
We minimize the Mean Square Error (MSE) on regression datasets and the Binary Cross-Entropy (BCE) on classification datasets.
We compare our results with tree-based architectures, which either train a single or an ensemble of decision trees.
Namely, we compare against the greedy CART algorithm  \citep{breiman1984classification} and two optimal decision tree learners:
OPTREE with local search~\citep[Optree-LS, ][]{dunn2018optimal} 
and a state-of-the-art optimal tree method \citep[GOSDT, ][]{lin2020generalized}.
We also consider three baselines with probabilistic routing:
deep neural decision trees \citep[DNDT, ][]{yang2018deep},
deep neural decision forests~\citep{kontschieder2015deep} configured to jointly optimize the routing and the splits and to use an ensemble size of 1 (NDF-1), and adaptive neural networks~\citep[ANT, ][]{tanno2019adaptive}.
As for the ensemble baselines, we compare to NODE~\citep{popov2019neural}, the state-of-the-art method for training a forest of differentiable oblivious decision trees on tabular data, and to XGBoost~\citep{Chen:2016:XST:2939672.2939785}, a scalable tree boosting method.
%
We carry out the experiments on the following datasets. 
\textbf{Regression:}
{\em Year} \citep{msd},
Temporal regression task constructed from the Million Song Dataset;
{\em Microsoft} \citep{mslr},
Regression approach to the MSLR-Web10k Query--URL relevance prediction for
learning to rank;
{\em Yahoo} \citep{chapelle2011yahoo}, Regression approach to the C14 learning-to-rank challenge.
\textbf{Binary classification:}
{\em Click}, Link click prediction based on the KDD Cup 2012
dataset, encoded and subsampled following \citet{popov2019neural};
{\em Higgs} \citep{higgs}, prediction of Higgs boson--producing
events.

\begin{table}[t!]
\caption{\label{tab:tabular-reg}%
Results on tabular regression datasets. We report average and standard deviations over $4$ runs of MSE and \textbf{bold} best results, and those within a standard deviation from it, for each family of algorithms (single tree or ensemble).
For the single tree methods we additionally report the average training times (s).}

\begin{center}
\scalebox{0.85}{
\begin{tabular}{clccc}
\hline
 &METHOD  & YEAR & MICROSOFT & YAHOO \\ \hline
\parbox[t]{2mm}{\multirow{3}{*}{\rotatebox[origin=c]{90}{Single}}}
& CART & $96.0 \pm 0.4$ & $0.599 \pm 1\text{e-}3$ & $0.678 \pm 17\text{e-}3$\\
& ANT   & $\bm{77.8 \pm 0.6}$ & $\bm{0.572 \pm 2\text{e-}3}$ & $\bm{0.589 \pm 2\text{e-}3}$\\
& Ours    &$\bm{77.9\pm0.4}$ & $\bm{0.573\pm2\text{e-}3}$ & $\bm{0.591 \pm 1\text{e-}3}$  \\\hline
\parbox[t]{2mm}{\multirow{2}{*}{\rotatebox[origin=c]{90}{Ens.}}}
& NODE     &$\bm{76.2 \pm 0.1}$ & $0.557 \pm 1\text{e-}3$ & $0.569\pm1\text{e-}3$ \\
& XGBoost  &$78.5 \pm 0.1$ & $\bm{0.554 \pm 1\text{e-}3}$ & $\bm{0.542\pm1\text{e-}3}$\\\hline\hline
\parbox[t]{2mm}{\multirow{3}{*}{\rotatebox[origin=c]{90}{Time}}}
& CART & $23$s & $20$s & $26$s \\
& ANT & $4674$s & $1457$s & $4155$s \\
& ours & $1354$s & $1117$s & $825$s \\ \hline
\end{tabular}}
\end{center}
\end{table}

For all tasks, we follow the preprocessing and task setup from
\citep{popov2019neural}.
All datasets come with training/test splits.
We make use of 20\% of the training set as validation set for selecting the best model over training and for tuning the hyperparameters.
We tune the hyperparameters for all methods.
and optimize eq.~\eqref{eq:obj} and all neural network methods (DNDT, NDF, ANT and NODE) using the Quasi-Hyperbolic Adam~\citep{ma2018quasi} stochastic gradient descent method.
Further details are provided in the supplementary.
Tables~\ref{tab:tabular-reg} and~\ref{tab:tabular-cls} report the obtained results on the regression and classification datasets respectively.\footnote{GOSDT/DNDT/Optree-LS/NDF are for classification only.}
Unsurprisingly, ensemble methods outperfom single-tree ones on all datasets, although at the cost of being harder to visualize/interpret.
Our method has the advantage of (a) generalizing to any task; (b) outperforming or matching all single-tree methods; (c) approaching the performance of ensemble-based methods; (d) scaling well with the size of datasets.
These experiments show that our model is also significantly faster to train, compared to its differentiable tree counterparts NDF-1 and ANT, while matching or beating the performance of these baselines, and it generally provides the best trade-off between time complexity and accuracy over all datasets (visualizations of this trade-off are reported in the supplementary material).
Further results on smaller datasets are available in the supplementary material to provide a comparison with optimal tree baselines.

\begin{table}
\caption{\label{tab:tabular-cls}%
Results on tabular classification datasets. We report average and standard deviations over $4$ runs of error rate. Best result for each family of algorithms (single tree or ensemble) are in \textbf{bold}.  Experiments are run on a machine with 16 CPUs and 64GB of RAM, with a training time limit of $3$ days. We denote methods that exceed this memory and training time as OOM and OOT, respectively.
For the single tree methods we additionally report the average training times (s) when available.}

\begin{center}
\scalebox{0.9}{
\begin{tabular}{clcc}
\hline
 &METHOD  & CLICK & HIGGS \\ \hline
\parbox[t]{2mm}{\multirow{7}{*}{\rotatebox[origin=c]{90}{Single Tree}}}
& GOSDT   & OOM & OOM \\
& OPTREE-LS  & OOT & OOT\\
& DNDT    & $0.4866 \pm 1\text{e-}2$ & OOM\\
& NDF-1   & $0.3344 \pm 5\text{e-}4$ & $0.2644 \pm 8\text{e-}4$\\
& CART & $0.3426 \pm 11\text{e-}3$ & $0.3430\pm 8\text{e-}3$\\
& ANT     & $0.4035 \pm 0.1150$ & $0.2430 \pm 6\text{e-}3$\\
& Ours    & $\bm{0.3340 \pm 3\text{e-}4}$ & $\bm{0.2201\pm3\text{e-}4}$ \\ \hline
\parbox[t]{2mm}{\multirow{2}{*}{\rotatebox[origin=c]{90}{Ens.}}}
& NODE     & $\bm{0.3312 \pm 2\text{e-}3}$ & $\bm{0.210\pm5\text{e-}4}$ \\
& XGBoost  & $\bm{0.3310 \pm 2\text{e-}3}$ & $0.2334 \pm 1\text{e-}3$\\\hline\hline
\parbox[t]{2mm}{\multirow{5}{*}{\rotatebox[origin=c]{90}{Time}}}
& DNDT & $681$s & -\\
& NDF-1 & $3768$s & $43593$s \\
& CART & $3$s & $113$s\\
& ANT & $75600$s & $62335$s\\
& ours & $524$s & $18642$s \\ \hline
\end{tabular}}
\end{center}

\end{table}

\begin{figure*}

    \centering
    \includegraphics[width=\textwidth]{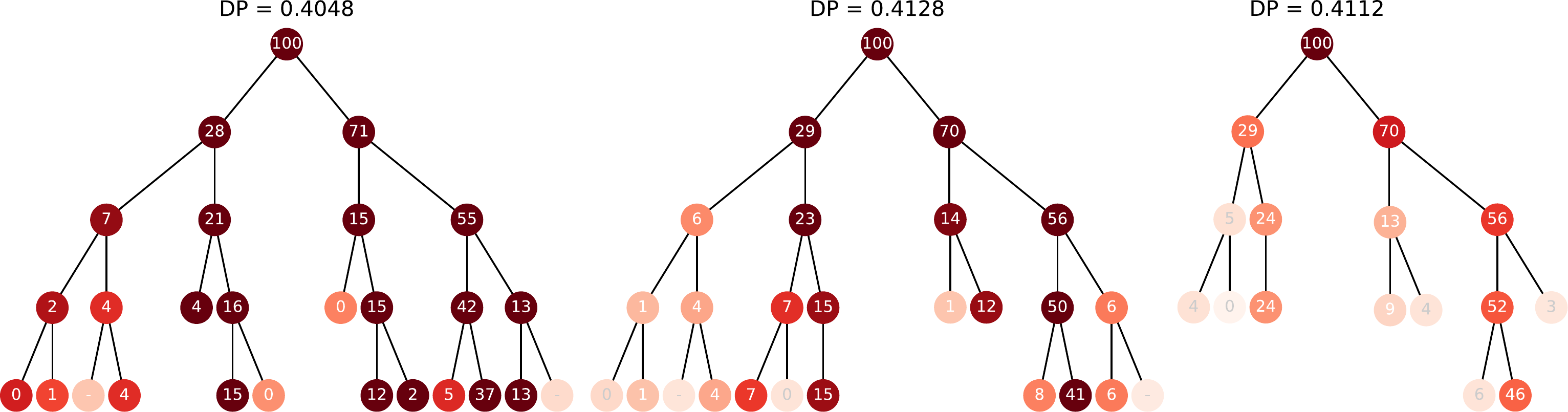}
    \caption{\label{fig:pruning}%
{\em Glass} tree routing distribution, in rounded percent of dataset, for $\lambda$ left-to-right in $\{1, 10, 100\}$.
The larger $\lambda$, the more nodes are pruned.
We report dendrogram purity (DP) and represent the nodes by the percentage of
points traversing them (normalized at each depth level) and with a color
intensity depending on their $\ab_t$ (the darker the higher $\ab_t$). The empty nodes are labeled by a dash $-$ and the inactive nodes ($a_t \!=\! 0$) have been removed.
}
\end{figure*}

\subsection{Self-Supervised Hierarchical Clustering}\label{sec:clust}

\begin{table}
    \caption{\label{tab:clust}%
Results for hierarchical clustering. We report average and standard deviations of {\em dendrogram purity} over four runs.} 
    \begin{center}\small
        \begin{tabular}{lcc}\hline
        METHOD  & GLASS & COVTYPE \\
        \hline
        Ours         & $0.468\pm0.028$ & $\mathbf{0.459\pm0.008}$ \\
        gHHC             &$0.463\pm0.002$ & $0.444\pm0.005$ \\
        HKMeans             &$\mathbf{0.508\pm0.008}$ & $0.440\pm0.001$ \\
        BIRCH             &$0.429\pm0.013$ & $0.440\pm0.002$ \\\hline
        \end{tabular}
    \end{center}
\end{table}

To show the versatility of our method, we carry out a second set of experiments on hierarchical clustering tasks. Inspired by the recent success of self-supervised learning approaches \citep{lan2019albert,he2020momentum}, we learn a tree for hierarchical clustering in a self-supervised way. Specifically, we regress a subset of input features from the remaining features, minimizing the MSE. This allows us to use eq.~\eqref{eq:obj} to learn a hierarchy (tree).
To evaluate the quality of the learned trees, we compute their dendrogram purity \citep[DP;][]{monath2019gradient}.
DP measures the ability of the learned tree to separate points from different classes, and corresponds to the expected purity of the least common ancestors of points of the same class.

We experiment on the following datasets: {\em Glass} \citep{mldata}:
glass identification for forensics, and {\em Covtype}
\citep{blackard1999comparative,mldata}: cartographic variables for forest cover
type identification.
For {\em Glass}, we regress features `Refractive Index' and `Sodium,' and for
{\em Covtype} the horizontal and vertical `Distance To Hydrology.'
We split the datasets into training/validation/test sets, with sizes 60\%/20\%/20\%. Here we only consider linear $f_\phi$.
As before, we optimize Problem~\ref{eq:obj} using the Quasi-Hyperbolic Adam
algorithm and tune the hyper-parameters using the validation reconstruction error.

As baselines, we consider:
BIRCH~\citep{zhang1997birch} and Hierarchical KMeans (HKMeans), the standard methods for performing clustering on large datasets; and the recently proposed gradient-based Hyperbolic Hierarchical Clustering \citep[gHHC, ][]{monath2019gradient} designed to construct  trees in hyperbolic space.
Table~\ref{tab:clust} reports the dendrogram purity scores for all methods. Our method yields results comparable to all baselines, even though not specifically tailored to hierarchical clustering.

\paragraph{Tree Pruning}
The hyper-parameter $\lambda$ in Problem~\ref{eq:qp_prune} controls how aggressively the tree is pruned, hence the amount of tree splits that are actually used to make decisions.
This is a fundamental feature of our framework as it allows to smoothly trim the portions of the tree that are not necessary for the downstream task, resulting in lower computing and memory demands at inference. 
In Figure~\ref{fig:pruning}, we study the effects of pruning on the tree learned on {\em Glass} with a depth fixed to $D \!=\! 4$.
We report how inputs are distributed over the learned tree for different values of $\lambda$.
We notice that increasing $\lambda$ effectively prune nodes and entire portions of the tree, without significantly impact performance (as measured by dendrogram purity).

To look into the evolution of pruning during training, we further plot the \% of active (unpruned) nodes within a training epoch in Figure~\ref{fig:epoch-an}.
We observe that (a) it appears possible to increase and decrease this fraction through training (b) the fraction seems to stabilize in the range $45\%$-$55\%$ after a few epochs.

\begin{figure}
    \centering
    \includegraphics[width=0.3\textwidth]{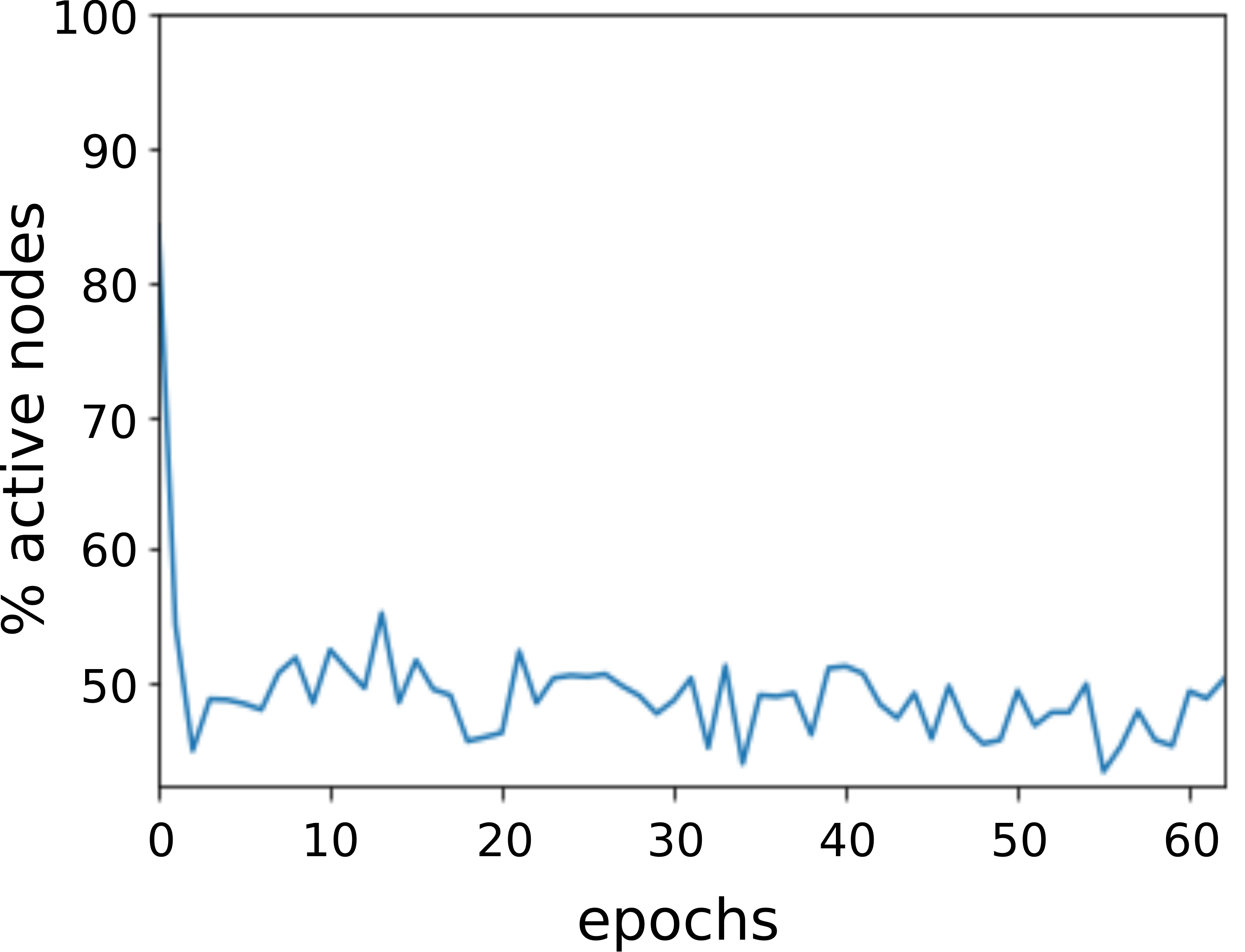}
    \caption{Percentage of active nodes during training as a function of the number of epochs on {\em Glass}, $D \!=\! 6, \lambda\!=\!10$.}
    \label{fig:epoch-an}
\end{figure}

\paragraph{Class Routing}
To gain insights on the latent structure learned by our method, we study how points are routed through the tree, depending on their class.
The {\em Glass} dataset is particularly interesting to analyze as its classes come with an intrinsic hierarchy, \eg, with superclasses \textit{Window} and \textit{NonWindow}.
Figure~\ref{fig:class-routing} reports the class routes for four classes.
As the trees are constructed without supervision, we do not expect the structure to exactly reflect the class partition and hierarchy. Still, we observe that points from the same class or super-class traverse the tree in a similar way.
Indeed, trees for class \textit{Build}~\ref{fig:build}
and class \textit{Vehicle}~\ref{fig:vehicle} that both belong to the \textit{Window} super-class, share similar paths, unlike the classes Containers~\ref{fig:cont}
and Headlamps~\ref{fig:head}.

\begin{figure}[t!]
    \centering
    \subfigure[Window-Float-Build]{\label{fig:build}\includegraphics[width=0.24\textwidth]{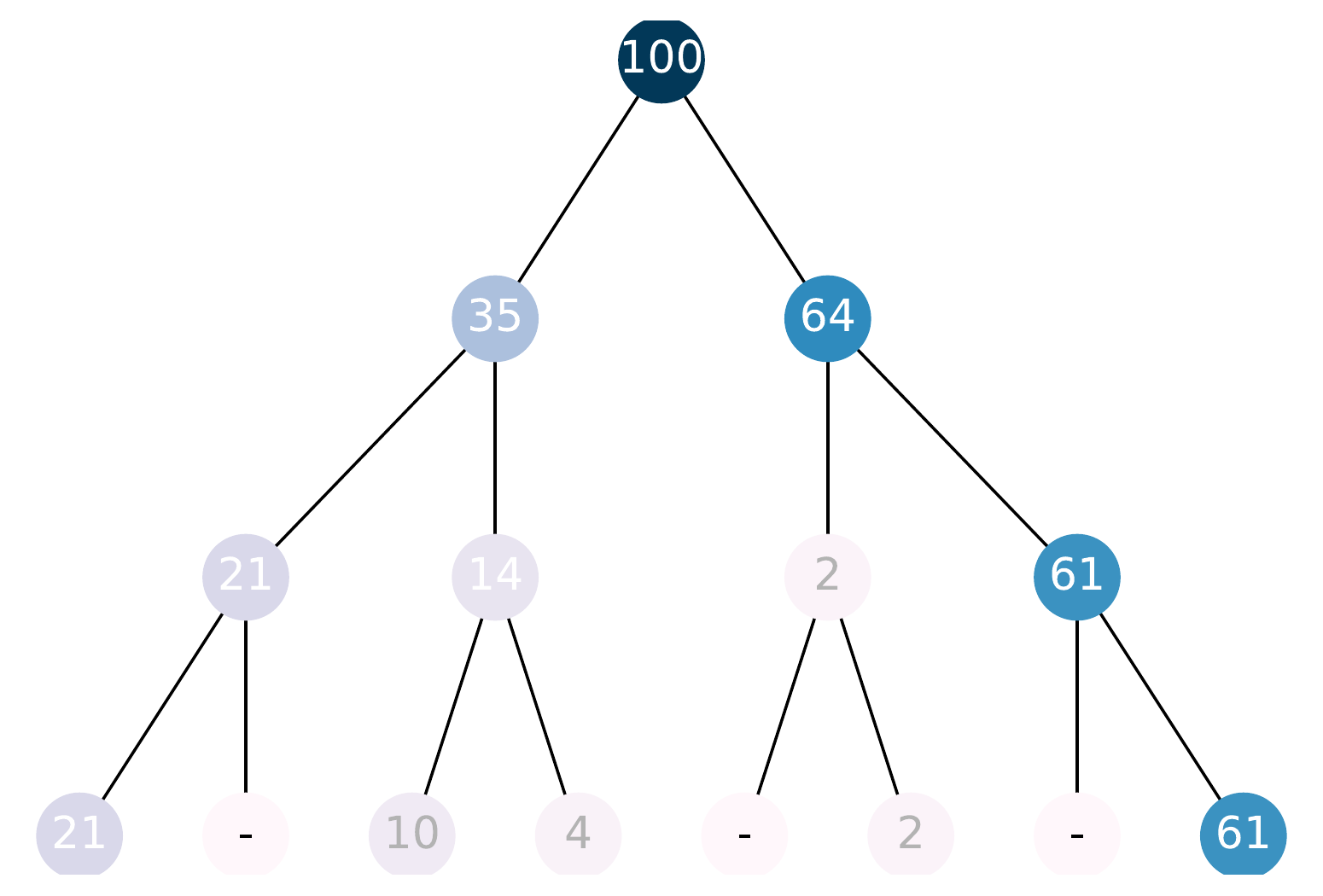}}\hfill
    \subfigure[Window-Float-Vehicle]{\label{fig:vehicle}\includegraphics[width=0.24\textwidth]{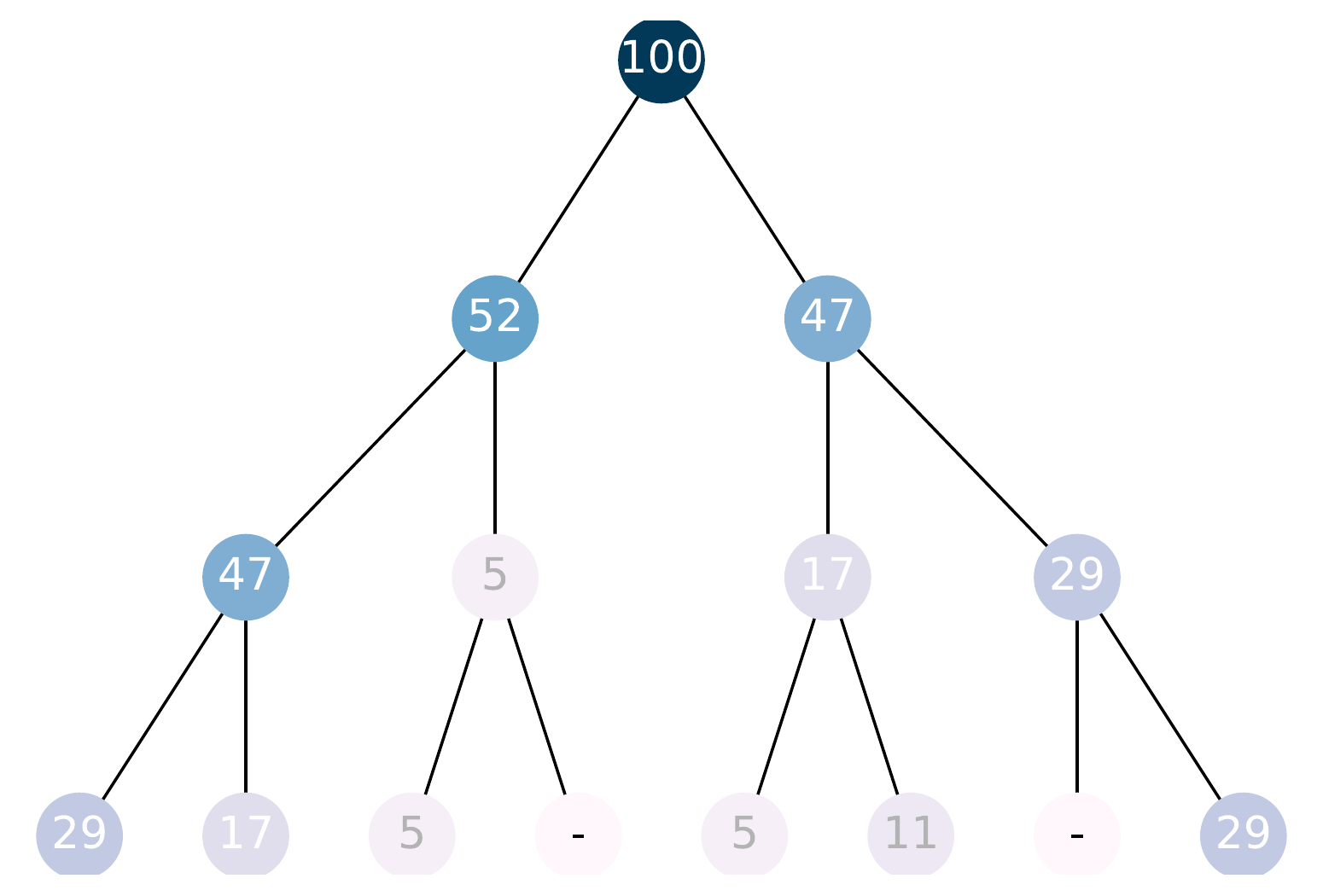}}\hfill
    \subfigure[NonW-Containers]{\label{fig:cont}\includegraphics[width=0.24\textwidth]{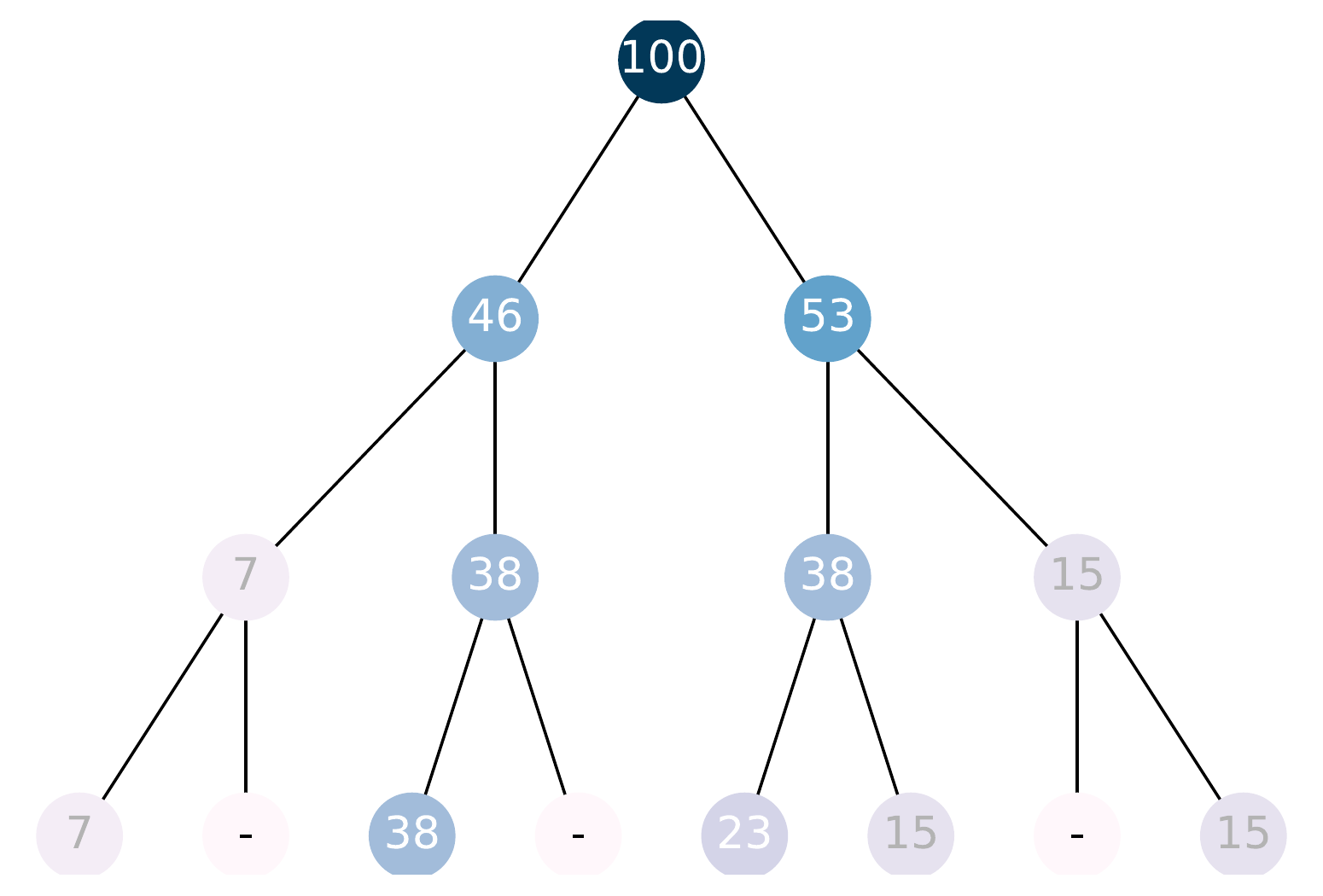}}\hfill
    \subfigure[NonW-Headlamps]{\label{fig:head}\includegraphics[width=0.24\textwidth]{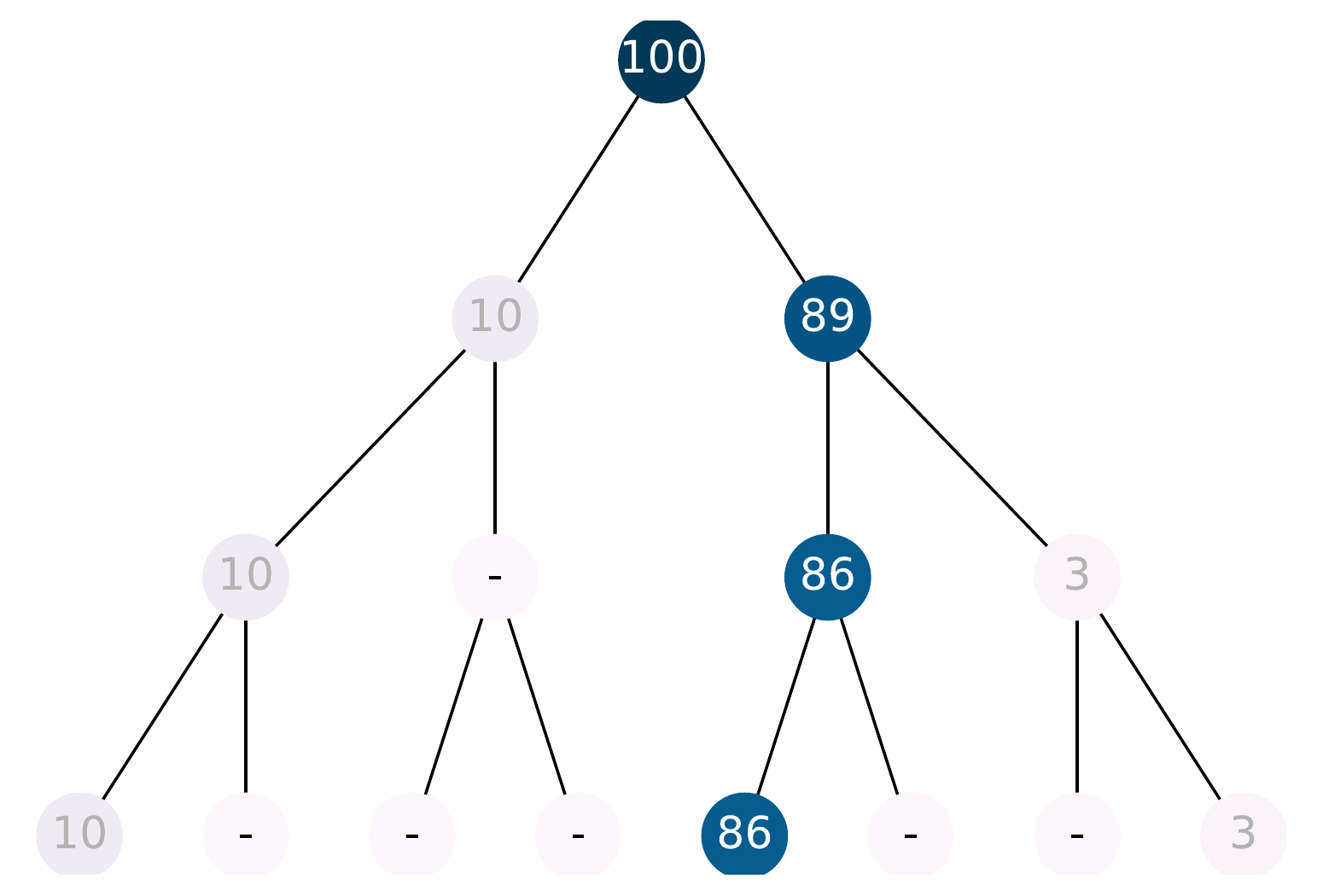}}
    \caption{\label{fig:class-routing}%
Class routing distributions on {\em Glass}, with distributions normalized over each depth level. Trees were trained with optimal hyper-parameters (depth $D\!=\!5$), but we plot nodes up to $D\!=\!3$ for visualization ease. Empty nodes are labeled by a dash $-$.
    }
\end{figure}


\section{Discussion}
In this work we have presented a new optimization approach to learn trees for a variety of machine learning tasks. Our method works by sparsely relaxing a ILP for tree traversal and pruning, to enable simultaneous optimization of these parameters, alongside splitting parameters and downstream functions via argmin differentiation. Our approach nears or improves upon recent work in both supervised learning and hierarchical clustering. 
We believe there are many exciting avenues for future work. One particularly interesting direction would be to unify recent advances in tight relaxations of nearest neighbor classifiers with this approach to learn efficient neighbor querying structures such as ball trees. Another idea is to adapt this method to learn instance-specific trees such as parse trees.

\section*{Acknowledgements}
The authors are thankful to Mathieu Blondel and Caio Corro
for useful suggestions.
They acknowledge the support of Microsoft AI for Earth grant as well as The Alan Turing Institute in the form of Microsoft Azure computing credits.
VN acknowledges support from
the European Research Council (ERC StG DeepSPIN 758969)
and the Funda\c{c}\~ao para a Ci\^encia e Tecnologia
through contract UIDB/50008/2020
while at the Instituto de Telecomunica\c{c}\~{o}es.

\bibliography{bib}
\bibliographystyle{icml2021}

\onecolumn
\appendix
\section{Appendix}
\subsection{Motivation and derivation of Equation~\ref{eq:qp_prune}.}
\label{app:reg}

\paragraph{Smoothing and symmetric regularization.}
Consider the Heaviside function used to make a binary decision, defined as
\begin{equation}
   h(q) = \begin{cases} 1, & q \geq 0, \\ 0, & q < 0. \end{cases}
\end{equation}
To relax this discontinuous function into a continuous and differentiable one,
we remark that it is almost equivalent to the variational problem
\( \argmax \big\{zq : z \in \{0, 1\}\big\} \) with the exception of the ``tied'' case \(
q=0 \). We may relax the binary constraints into the convex interval $[0,1]$ and
add any strongly-concave regularizer to ensure the relaxation has a unique
maximizer everywhere and thus defines a continuous differentiable mapping:.
\begin{equation}
    \argmax_{0 \leq z \leq 1} zq - \omega(z).
\end{equation}
A tempting choice would be $\omega(z) = z^2/2$.
Note hovewer that as $|q|$ goes to zero, 
the problem converges to
$\argmax_{z} -z^2/2$ which is $0$.
This does not match our intuition for a good continuous relaxation, where a
small $q$ should suggest lack of commitment toward either option.
More natural here would be a regularizer that prefers
$z=1/2$ as $|q|$ goes to zero.
We may obtain such a regularizer by encoding the choice as a two-dimensional
vector $[z, 1-z]$ and penalizing its squared $l_2$ norm, giving:
\begin{equation}
\omega(z) = \frac{1}{4} \big\| [z, 1-z] \big\|^2
= \frac{1}{4} \big( z^2 + (1-z)^2 \big).
\end{equation}
As $|q| \to 0$, $\argmax_z -\omega(z)$ goes toward
$z=.5$. This regularizer, applied over an entire
vector $\sum_i \omega(z_i)$, is exactly the one we employ.
The $1/4$ weighting factor will simplify the derivation in the next paragraph.

\paragraph{Rewriting as a scaled projection.}
For brevity consider the terms for a single node, dropping the $i$ index:
\begin{align}
&\zb^\top \qb - \frac{1}{4} \big( \|\zb\|^2 + \|\bm{1} - \zb \|^2 \big)\\
=~&\zb^\top \qb - \frac{1}{4} \|\zb\|^2 -  \frac{1}{4} \|\bm{1}\|^2 - \frac{1}{4}
\|\zb\|^2 + \frac{1}{2} \zb^\top \bm{1}\\
=~&\zb^\top\underbrace{\left(\qb + \frac{1}{2} \bm{1} \right)}_{\tilde\qb} -\frac{1}{2} \|\zb\|^2
- \frac{1}{4} \|\bm{1}\|^2.
\intertext{We highlight and ignore terms that are constant w.r.t.\
$\zb$, as they have no effect on the optimization.}
=~&\zb^\top{\tilde \qb} - \frac{1}{2} \|\zb\|^2 + \text{const} \\
\intertext{We complete the square by adding and subtracting $.5\|\tilde\qb\|^2$, also constant
w.r.t. $\zb$.}
=~&\zb^\top{\tilde \qb} - \frac{1}{2} \|\zb\|^2 - \frac{1}{2} \|\tilde \qb\|^2 + \text{const} \\
=~& -\frac{1}{2} \| \zb - \tilde\qb \|^2 + \text{const}.
\end{align}
We may now add up these terms for every $i$ and flip the sign, turning the
maximization into a minimization, yielding the desired result.

To study how $\ab$ and $\zb$ are impacted by the relaxation, in Figure~\ref{fig:relax-diff} we report an empirical comparison of the solutions of the original Problem~\eqref{eq:ourilp} and of Problem~\eqref{eq:qp_prune}, after applying the quadratic relaxation.
For this experiment, we generated $1000$ random problems by uniformly drawing $\qb \in [-2, 2]^{n |\cT|}$, with $n = 10$ points and a tree of depth $D = 2$ and $|\cT| = 7$ nodes, and upper bounding the value of any child node with the one of its parent ($q_{it} = \min(q_{it}, q_{ip(t)})$), so that the tree constraints are satisfied.
We notice that the solutions of the relaxed problem are relatively close to the solutions of the original problem and that they tend to the optimal ones as the pruning hyper-parameter $\lambda$ increases.

\begin{figure}
    \centering
    \subfigure[Pruning variable $\ab$.]{\includegraphics[width=0.4\textwidth]{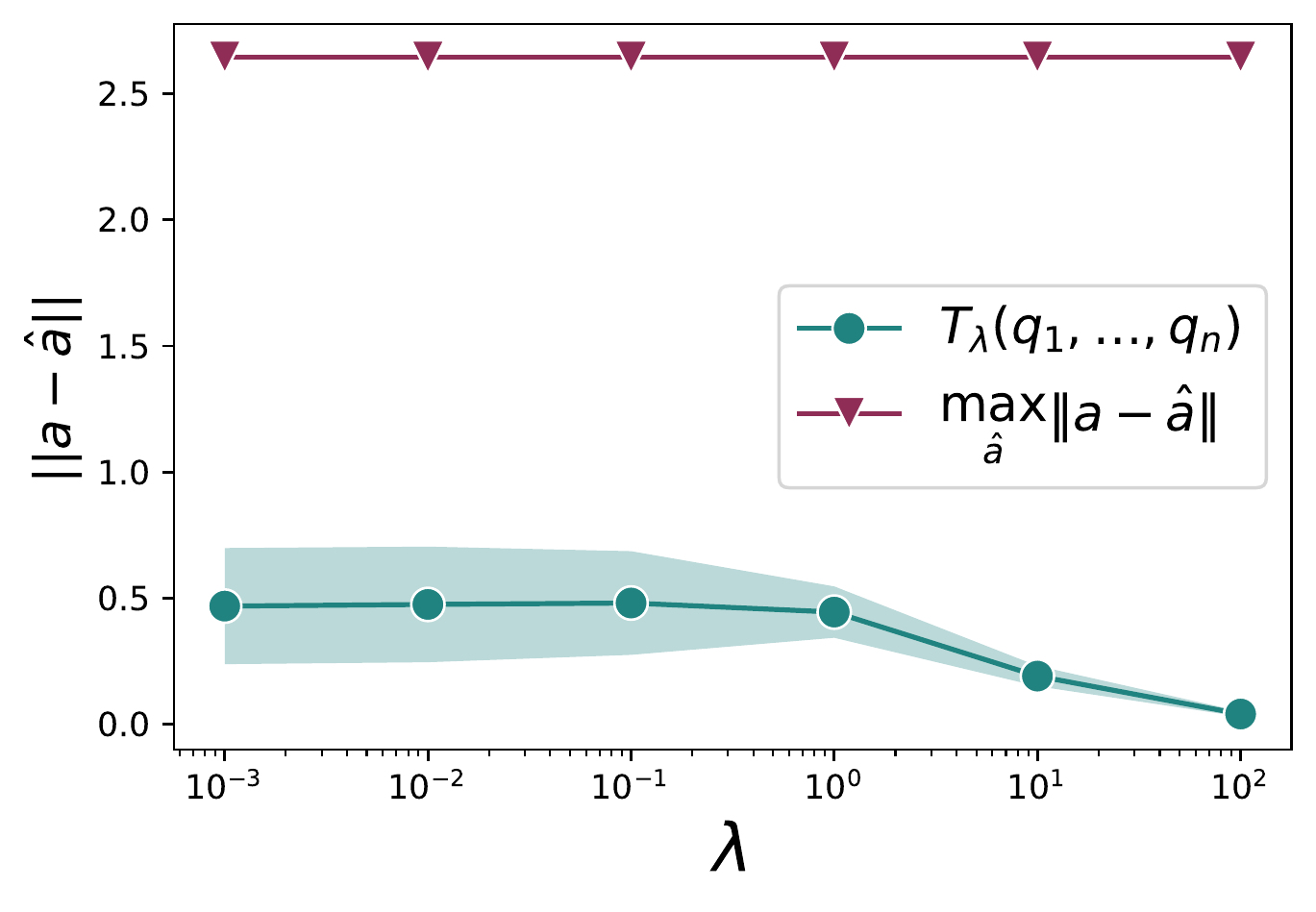}
    }\quad
    \subfigure[Tree traversals $\zb$.]{\includegraphics[width=0.4\textwidth]{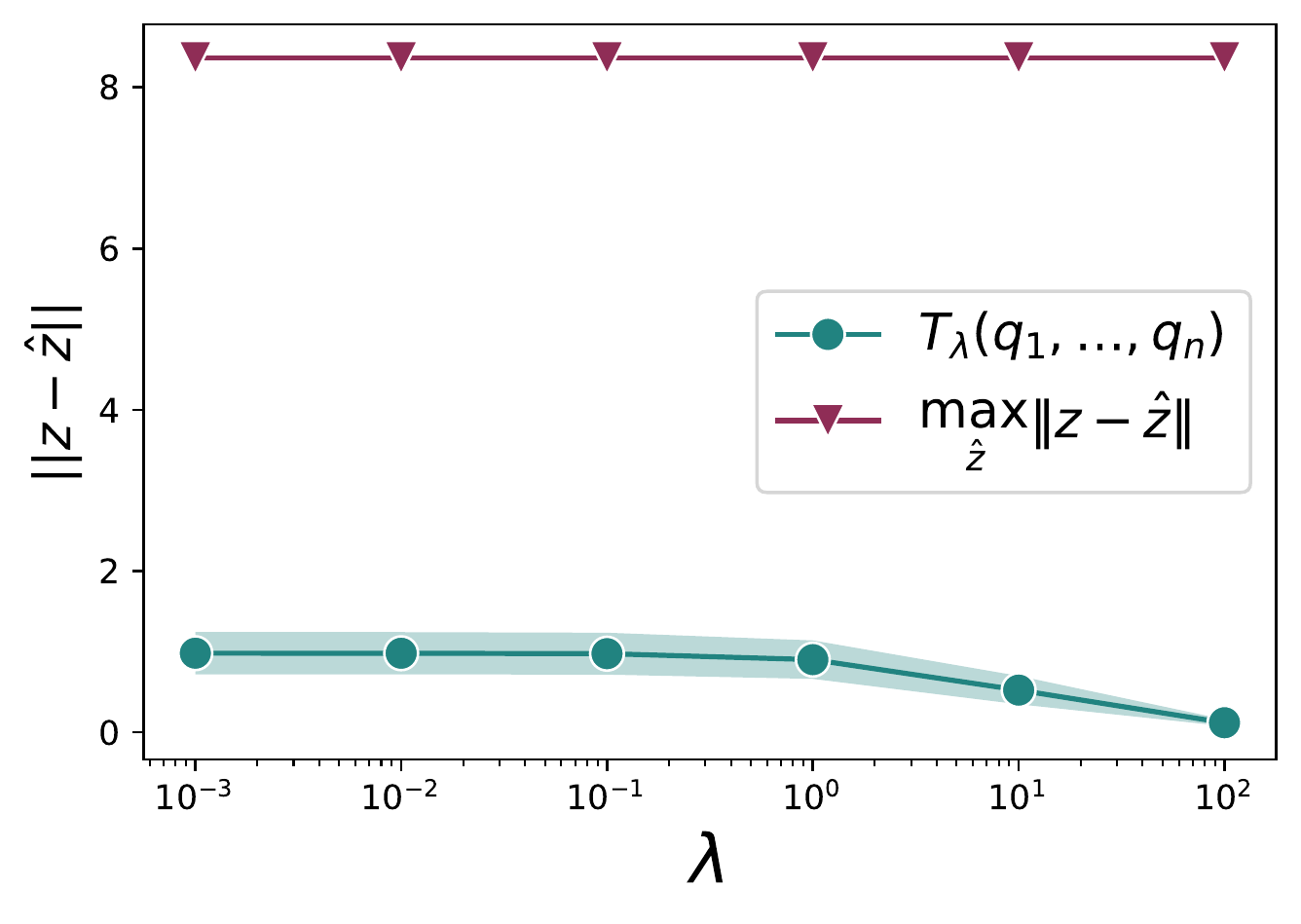}
    }
    \caption{Comparison of the solutions of Problem~\eqref{eq:ourilp}, denoted by $\ab$ and $\zb$, with the solutions of Problem~\eqref{eq:qp_prune}, denoted by $\hat{\ab}$ and $\hat{\zb}$.
    Average and one-standard-deviation intervals over $1000$ random problems are represented as a function of the pruning hyper-parameter $\lambda$, as well as the maximal possible gap. 
    }
    \label{fig:relax-diff}
\end{figure}


\subsection{Proof of Proposition~\ref{prop:scalar_subproblem}}\label{supp:scalar_subproblem}

Let $G \subset \mathcal{T}$ be a subset of (pooled) node indices.
We seek to solve
\begin{equation}\label{eq:subproblem-explicit}
\argmin_{a \in \bR} \sum_{t \in G} g_t(a)
~:=~
\argmin_{a \in [0, 1]} \frac{\lambda}{2}\sum_{t \in G}
 a^2
+ \sum_{i: a \leq q_{it}} \hlf {(a - q_{it} - \hlf)}^2
\end{equation}
Note that the final summation implicitly depends on the unknown $a$.
But, regardless of the
value of $a$, if $q_{it} \leq q_{i't'}$ and $q_{it}$ is included in the sum, then $q_{i't'}$ must also be included by transitivity.
We may therefore characterize the solution $a^\star$ via the number
of active inequality constraints
$k^\star = \big| \{(i, t) : a^\star \leq q_{i,t} + \hlf \}\big|$.
We do not know $a^\star$, but it is trivial to find by testing all possible values of $k$. For each $k$, we may find the set $S(k)$ defined in the proposition. For a given $k$, the candidate objective is
\begin{equation}
J_k(a) = \frac{\lambda}{2} \sum_{t \in G} a^2 + \sum_{(i,t) \in S(k)} \hlf(a - q_{it} - \hlf)^2
\end{equation}
and the corresponding $a(k)$ minimizing it can be found by setting the gradient
to zero:
\begin{equation}
J'_k(a)
= \lambda \sum_{t \in G} a
+ \sum_{(i,t) \in S(k)} (a - q_{i,t} - \hlf)
:= 0
\quad
\iff
\quad
a(k) = \frac{\sum_{(i,t) \in S(k)} (q_{it} + \hlf)}{\lambda|G|+k}.
\end{equation}
Since $|S(k)|=k$ and each increase in $k$ adds a non-zero term to the
objective, we must have $J_1\big(a(1)\big) \leq J_1\big(a(2)\big) \leq
J_2\big(a(2)\big) \leq \ldots$,
so we must take $k$ to be as small as possible,
while also ensuring the condition $|\{(i,t) : a(k) \leq q_{it} + \hlf\}| = k$,
or, equivalently, that $a(k) > q_{j^{([k+1])}} + \hlf$.
The box constraints may be integrated
at this point via clipping, yielding $a^\star =
\operatorname{Proj}_{[0,1]} \big(a(k^\star)\big)$.

\subsection{Benchmarking solvers}\label{app:time}
We study the running time and memory usage of Algorithm~\ref{alg} depending on the number of data points $n$ and the chosen tree depth $D$.
We compare its solving time and memory with those needed by the popular differentiable convex optimization framework {\bf Cvxpylayers}~\citep{cvxpylayers2019} (based on Cvxpy~\citep{diamond2016cvxpy, agrawal2018rewriting}) to solve Problem~\ref{eq:qp_prune}.
As this library is based on solvers implemented in Objective C and C we implement our approach in C++ for a fair comparison.
We report the solving times and memory consumption in Figure~\ref{fig:alg-time} for a forward-and-backward pass, where we vary one parameter $n$ or $D$ at a time and fix the other.
The algorithm that we specifically designed to solve problem~\eqref{eq:qp_prune} is indeed several magnitude faster than the considered generic solver and consumes significantly less memory, overall scaling better to the size of the tree and the number of inputs.

\begin{figure}
    \centering
    \subfigure[Fixed number of points $n = 100$]{\includegraphics[width=0.4\textwidth]{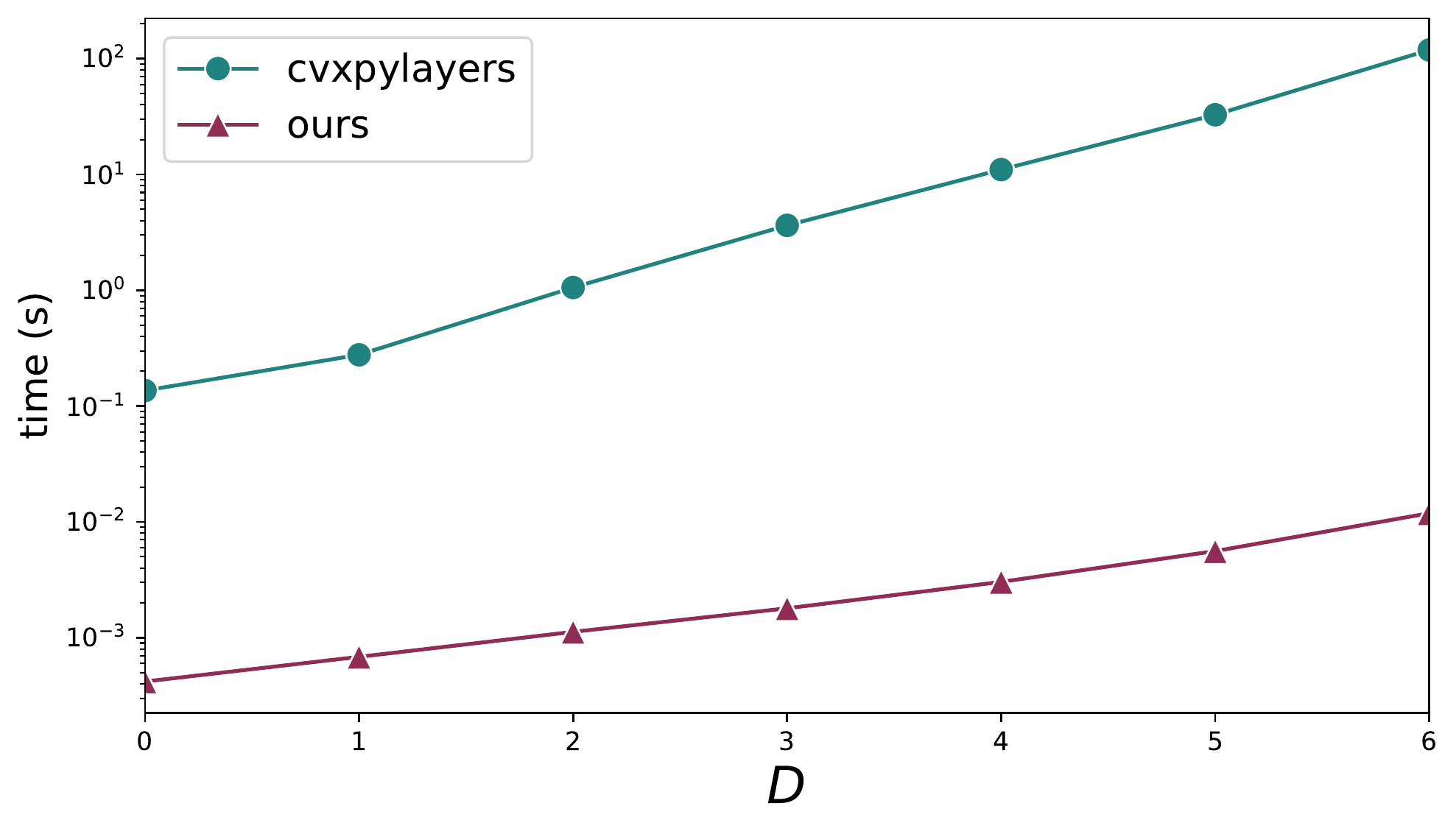}
    }\quad
    \subfigure[Fixed tree depth $D = 3$]{\includegraphics[width=0.4\textwidth]{figures/time-D3.pdf}
    }\\
    \subfigure[Fixed number of points $n = 100$]{\includegraphics[width=0.4\textwidth]{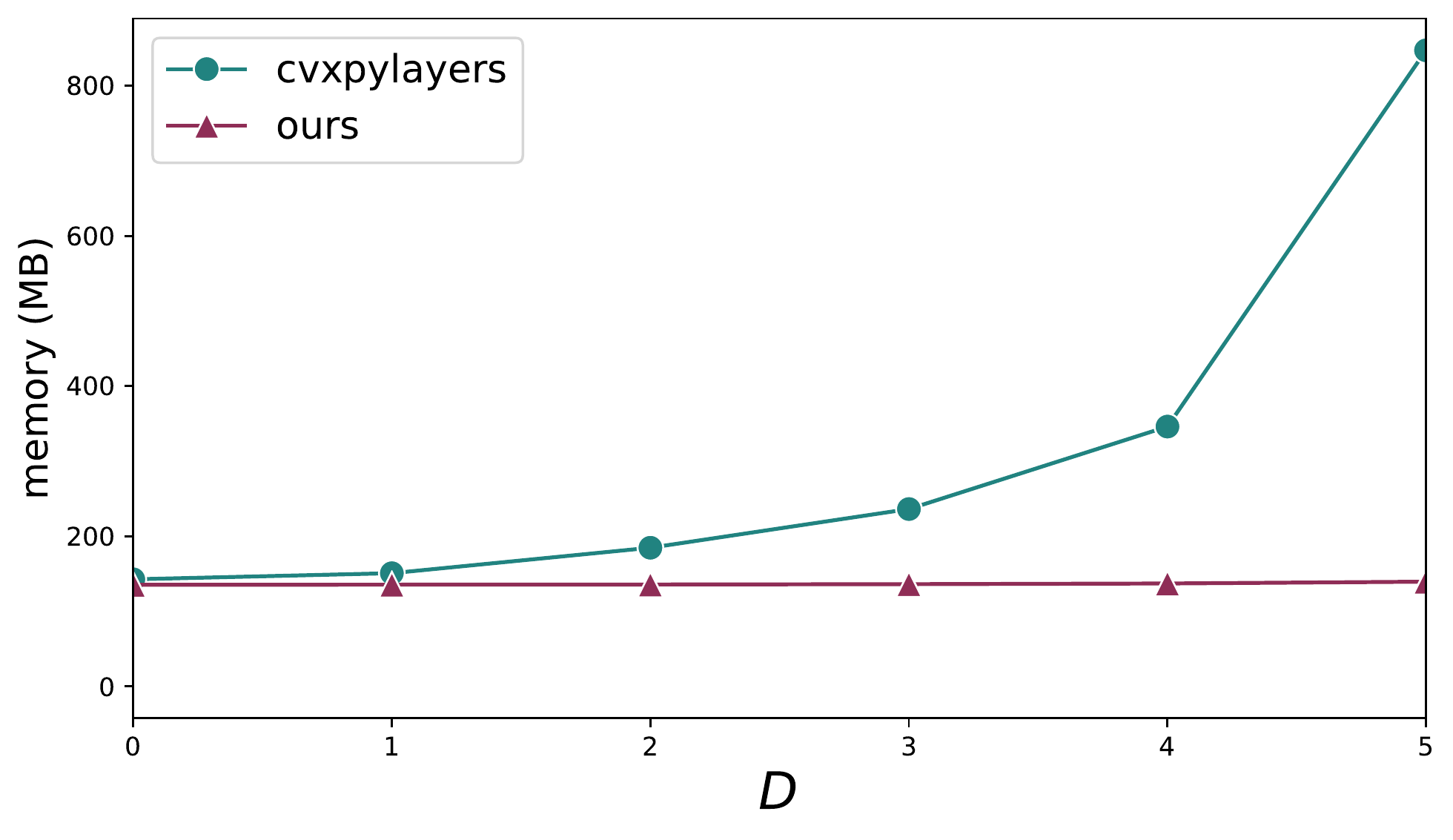}
    }\quad
    \subfigure[Fixed tree depth $D = 3$]{\includegraphics[width=0.4\textwidth]{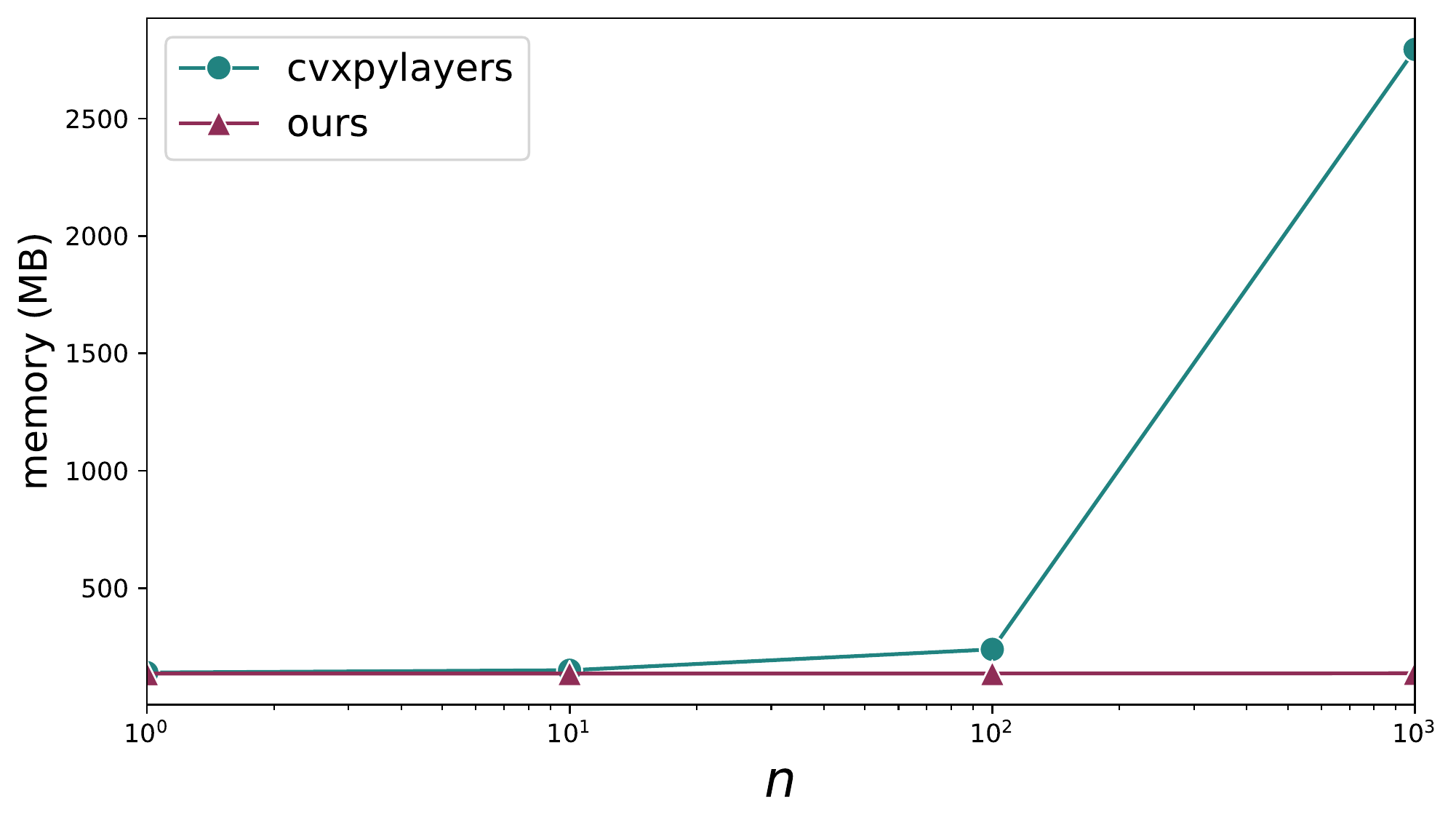}
    }
    \caption{Comparison of Algorithm~\ref{alg} and {\bf Cvxpylayers} in terms of (a-b) running time and (c-d) memory usage. $n$ takes values in a range that covers common choices of batch size. Time and $n$ axis are represented in logarithmic scale.
    }
    \label{fig:alg-time}
\end{figure}

\subsection{Further experimental details}
We tune the hyper-parameters of all methods with Optuna Bayesian optimizer~\citep{optuna_2019}, fixing the number of trials to $100$.
For all back-propagation-based methods, we fix the learning rate to $0.001$ and the batch size to $512$, use a scheduler reducing this parameter of a factor of $10$ every $2$ epochs where the validation loss has not improved, and fix the maximal number of epochs to $100$ and patience equal to $10$ for early stopping.
For our method, we further initialize the bias $\bm{b} \in \mathbb{R}^{|\mathcal{T}|}$ for the split function $s_\theta(x)$ (explicitly defining $s_\theta(x) = s_{\theta \setminus \bm{b}}(x) + \bm{b}$) to ensure that points are equally distributed over the leaves at initialization.
We hence initialize the bias to minus the initial average value of the training points traversing each node: $b_t = - \frac{1}{|\{x_i|q_{it} > 0\}_{i=1}^n|}\sum_{i=1}^n s_{{\theta_t \setminus b_t}}(x_i) \mathbbm{1}{[q_{it} > 0]}$.

\paragraph{Experiments on tabular datasets}
The other hyper-parameters for these experiments are chosen as follows:
\begin{itemize}
    \item {\bf Ours}: we tune $D$ in $\{2, \dots, 6\}$, $\lambda$ in $[1\text{e-}3, 1\text{e+}3]$ with log-uniform draws, the number of layers of the MLP $L$ in $\{2, \dots, 5\}$ and its dropout probability uniformly in $[0, 0.5]$, and the choice of activation for the splitting functions as linear or Elu;

    \item {\bf Optree-LS}: we fix the tree depth $D = 6$;

    \item {\bf CART}: we not bound the maximal depth and tune \textit{feature rate} uniformly in $[0.5, 1]$, \textit{minimal impurity decrease} in $[0, 1]$, minimal Cost-Complexity Pruning $\alpha$ log-uniformly in $[1\text{e-}16, 1\text{e+}2]$ and \textit{splitter} chosen between best or random;

    \item {\bf Node} and {\bf XGBoost}: all results are reported from \citet{popov2019neural}, where they used the same experimental set-up;

    \item {\bf DNDT}: we tune the softmax temperature uniformly between $[0, 1]$ and the number of feature cuts in $\{1, 2\}$;

    \item {\bf NDF}: we tune $D$ in $\{2, \dots, 6\}$ and fix the feature rate to $1$;

    \item {\bf ANT}: as this method is extremely modular (like ours), to allow for a fair comparison we choose a similar configuration. We then select as transformer the identity function, as router a linear layer followed by the Relu activation and with soft (sigmoid) traversals, and as solver a MLP with $L$ hidden layers, as defined for our method; we hence tune $L$ in $\{2, \dots, 5\}$ and its dropout probability uniformly in $[0, 0.5]$, and fix the maximal tree depth $D$ to $6$; we finally fix the number of epochs for growing the tree and the number of epochs for fine-tuning it both to $50$.

\end{itemize}

\paragraph{Experiments on hierarchical clustering}
For this set of experiments, we make us of a linear predictor and of linear splitting functions without activation.
We fix the learning rate to $0.001$ and the batch size $512$ for {\em Covtype} and $8$ for {\em Glass}.
The other hyper-parameters of our method are chosen as follows:
we tune $D$ in $\{2, \dots, 6\}$, $\lambda$ in $[1\text{e-}3, 1\text{e+}3]$ with log-uniform draws.
The results of the baselines are reported from \citet{monath2019gradient}.

\subsection{Additional experiments}
In Figure~\ref{fig:times} we represent the average test Error Rates or Mean Square Error as a function of the training time for each single-tree method on the tabular datasets of Section~\ref{sec:tabular}.
Notice that our method provides the best trade-off between time complexity and accuracy over all datasets.
In particular, it achieves Error Rates comparable on {\em Click} and significantly better on {\em Higgs} w.r.t. {\bf NDF-1} while being several times faster.
Table~\ref{tab:sizes} shows that this speed-up is not necessarily due to a smaller number of model parameters, but it comes from the deterministic routing of points through the tree.
Despite having model sizes bigger than ANT's ones, our method is significantly faster than this baseline as it offers an efficient way for optimizing the tree structure (via the joint optimization of pruning vector $\bm{a}$).
In comparison, ANT needs to incrementally grow trees in a first phase, to then fine-tune them in a second phase, resulting in a computational overhead.
Finally, {\em DNDT} implements a soft binning with trainable binning thresholds (or cut points) for each feature, hence scaling badly w.r.t this parameter and resulting in a memory overflow on {\em HIGGS}.

\begin{table}[t!]
\caption{Number of parameters for single-tree methods on tabular datasets.}

\label{tab:sizes}
\begin{center}
\scalebox{0.85}{
\begin{tabular}{clccccc}  \hline
 &METHOD  & YEAR & MICROSOFT & YAHOO & CLICK & HIGGS \\ \hline
\parbox[t]{2mm}{\multirow{5}{*}{\rotatebox[origin=c]{90}{Single Tree}}}
& CART & $164$ & $58$ & $883$ & $12$ & $80$\\
& DNDT    & - & - & - & $4096$ & -\\
& NDF-1   & - & - & - & $78016$ & $47168$\\
& ANT & $179265$ & $17217$ & $53249$ & $81$ & $7874$\\
& Ours    & $186211$ & $52309$ & $55806$ & $43434$ & $701665$ \\ \hline
\end{tabular}}
\end{center}

\end{table}

\begin{figure}
    \subfigure[{\bf Click}]{    \label{fig:time-click}\includegraphics[width=0.47\textwidth]{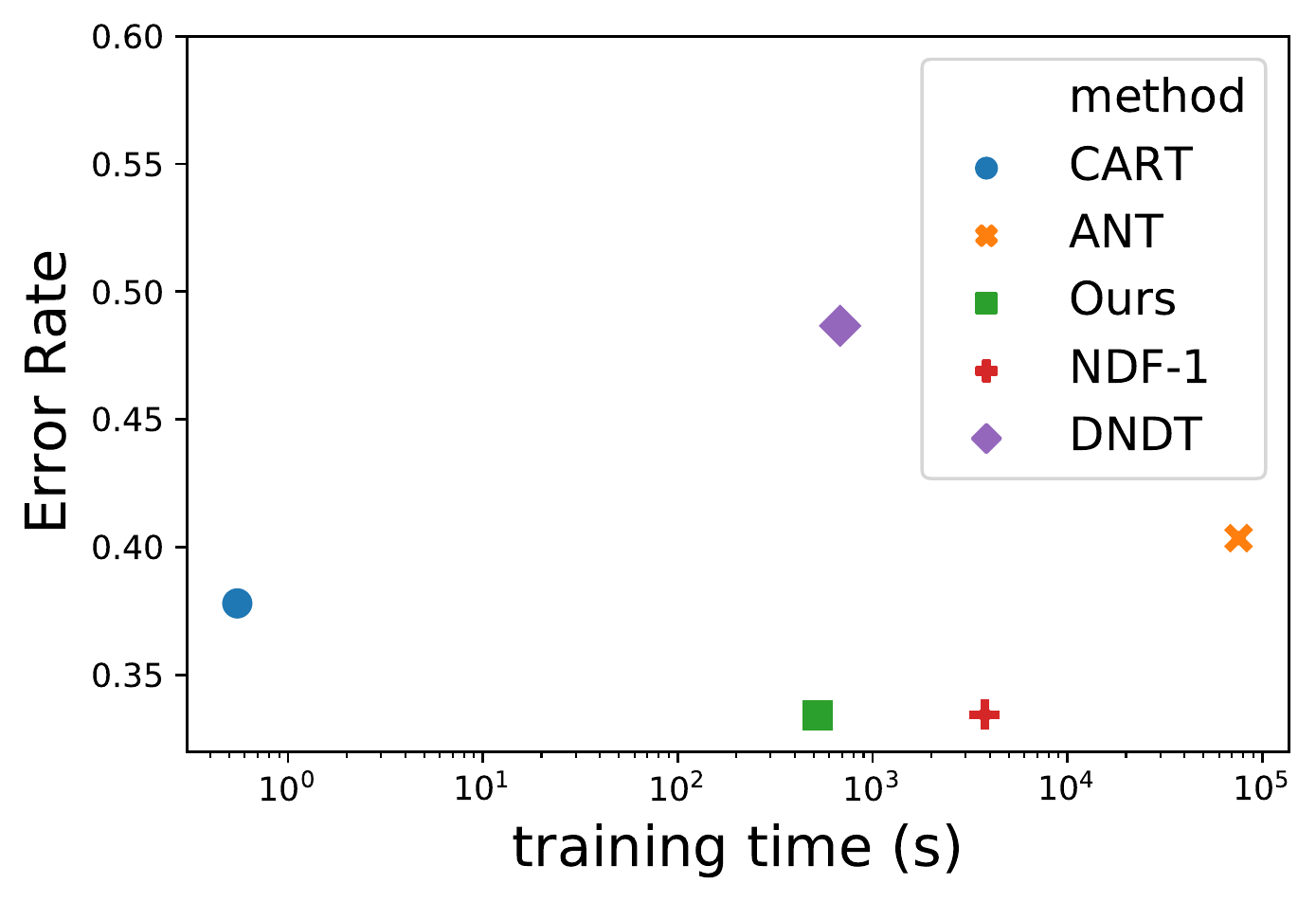}
    }\hfill
    \subfigure[{\bf Higgs}]{    \label{fig:time-higgs}\includegraphics[width=0.47\textwidth]{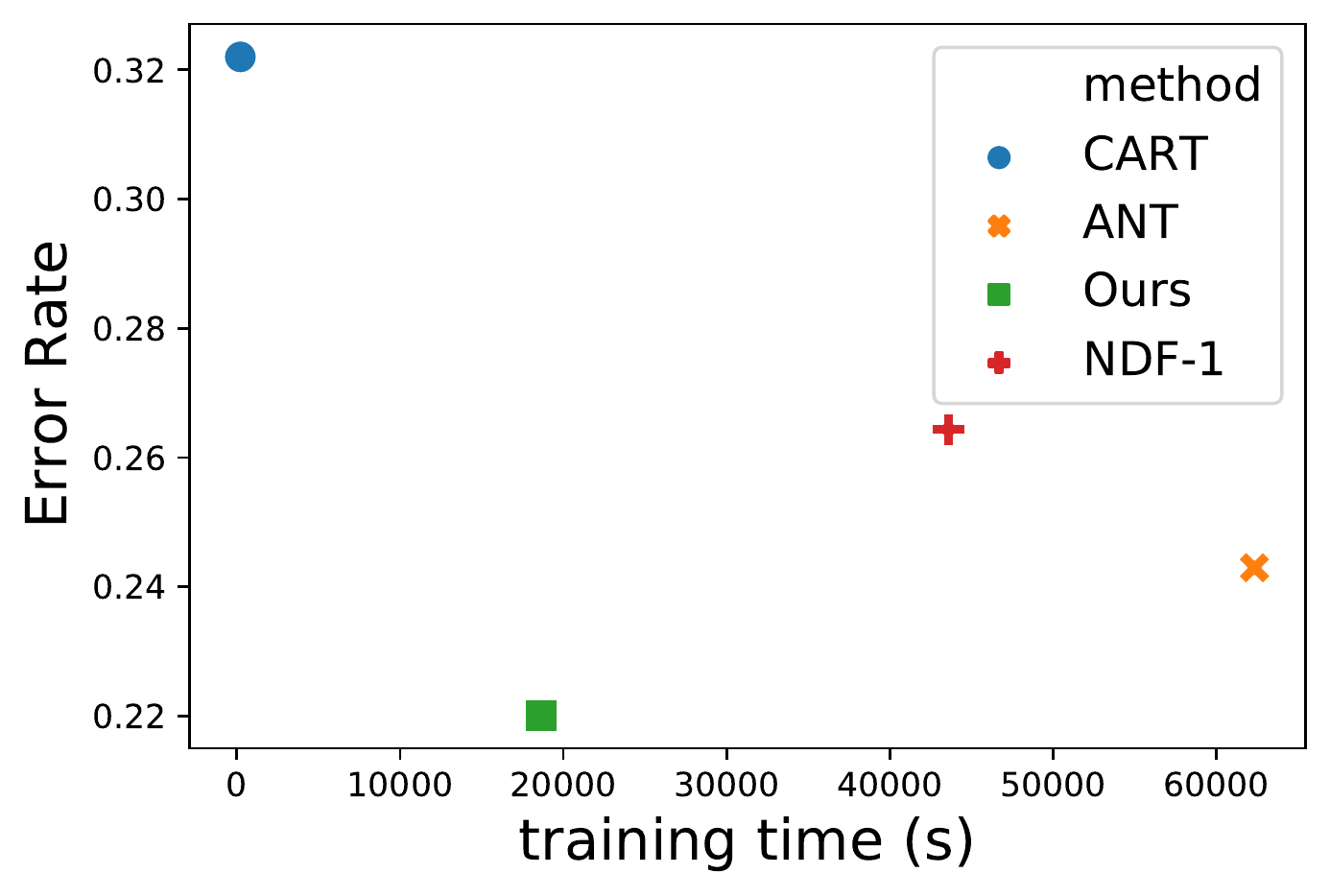}
    }\\
    \subfigure[{\bf Year}]{    \label{fig:time-year}\includegraphics[width=0.47\textwidth]{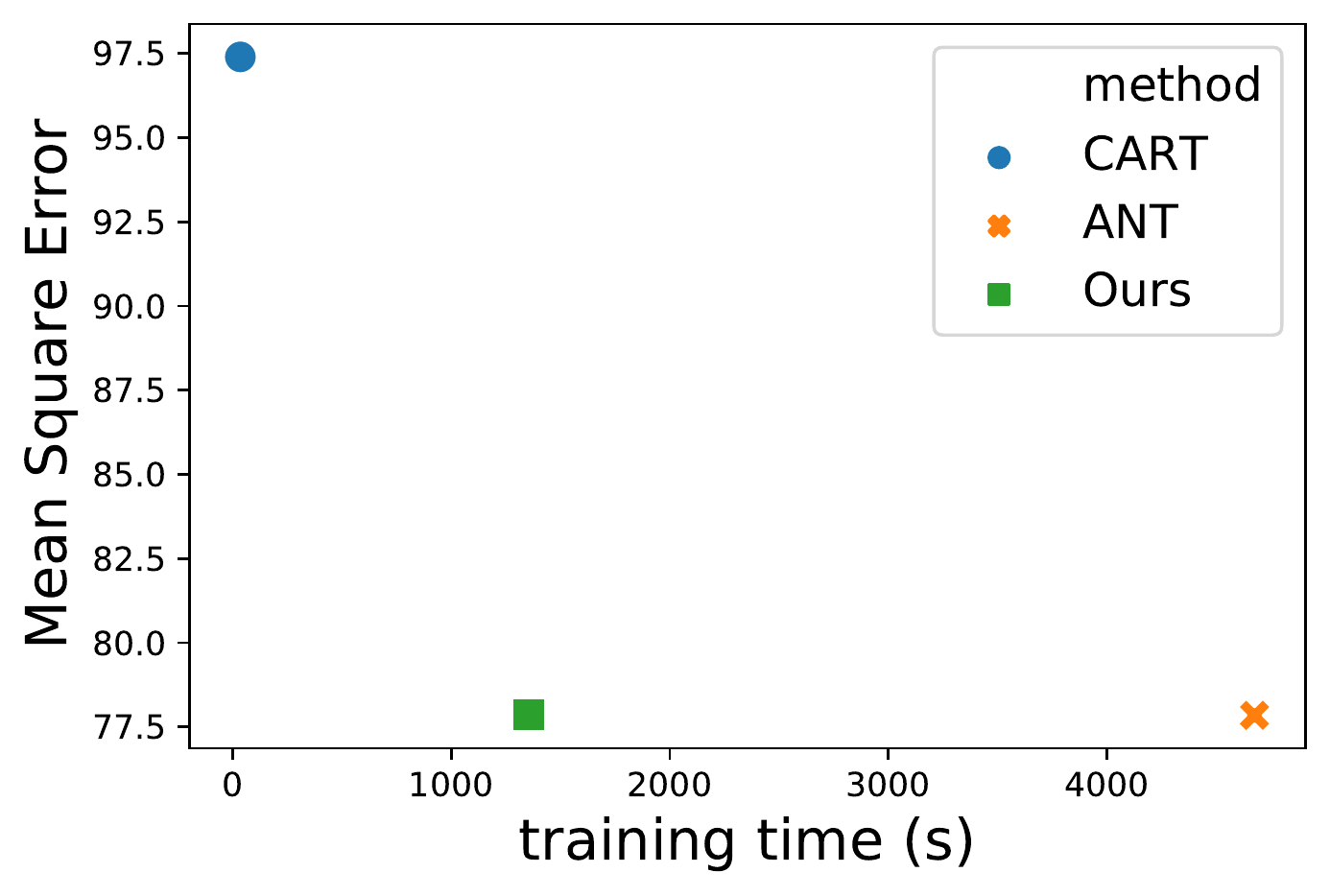}
    }\hfill
    \subfigure[{\bf Microsoft}]{    \label{fig:time-mic}\includegraphics[width=0.47\textwidth]{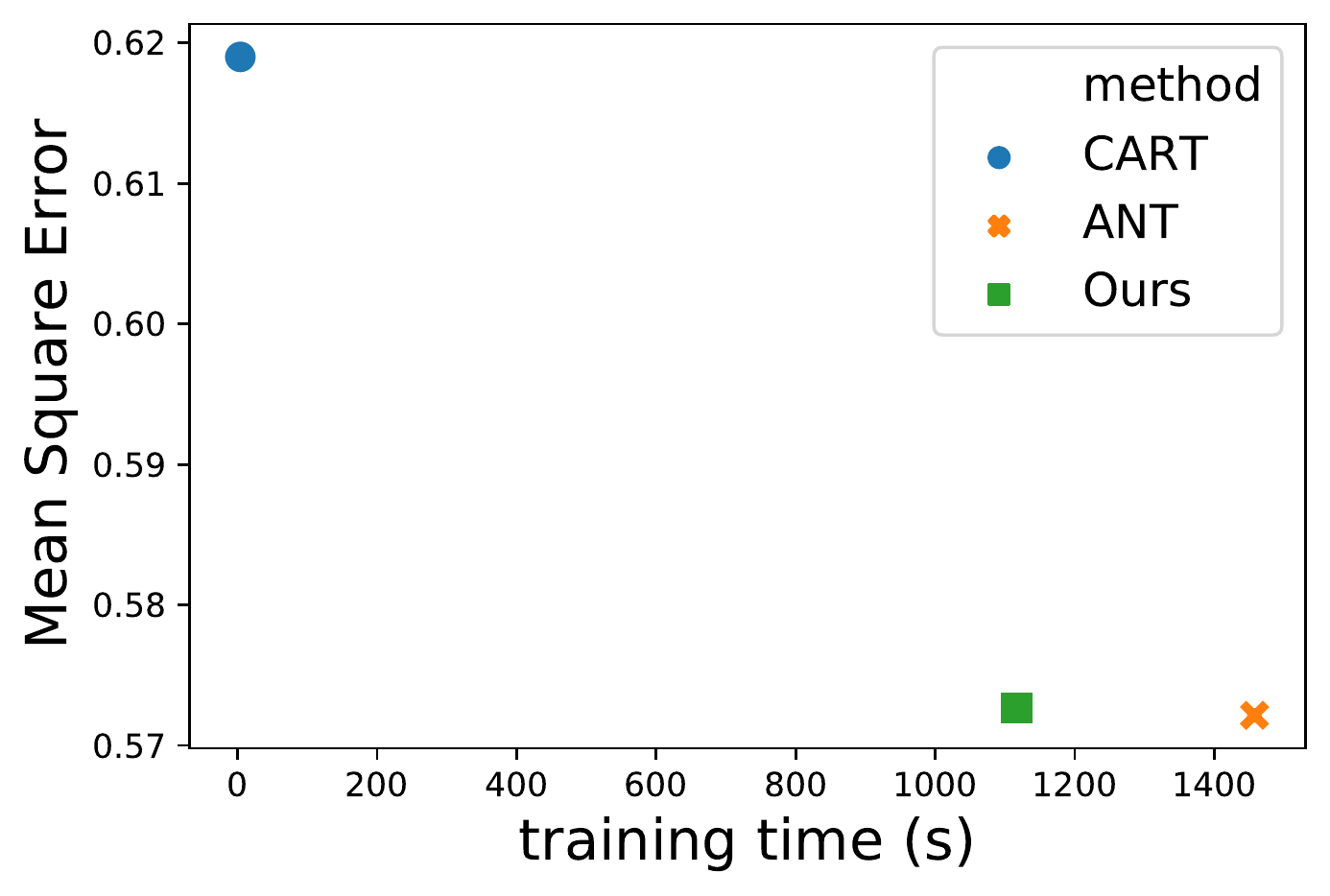}
    }\\
    \subfigure[{\bf Yahoo}]{    \label{fig:time-yahoo}\includegraphics[width=0.47\textwidth]{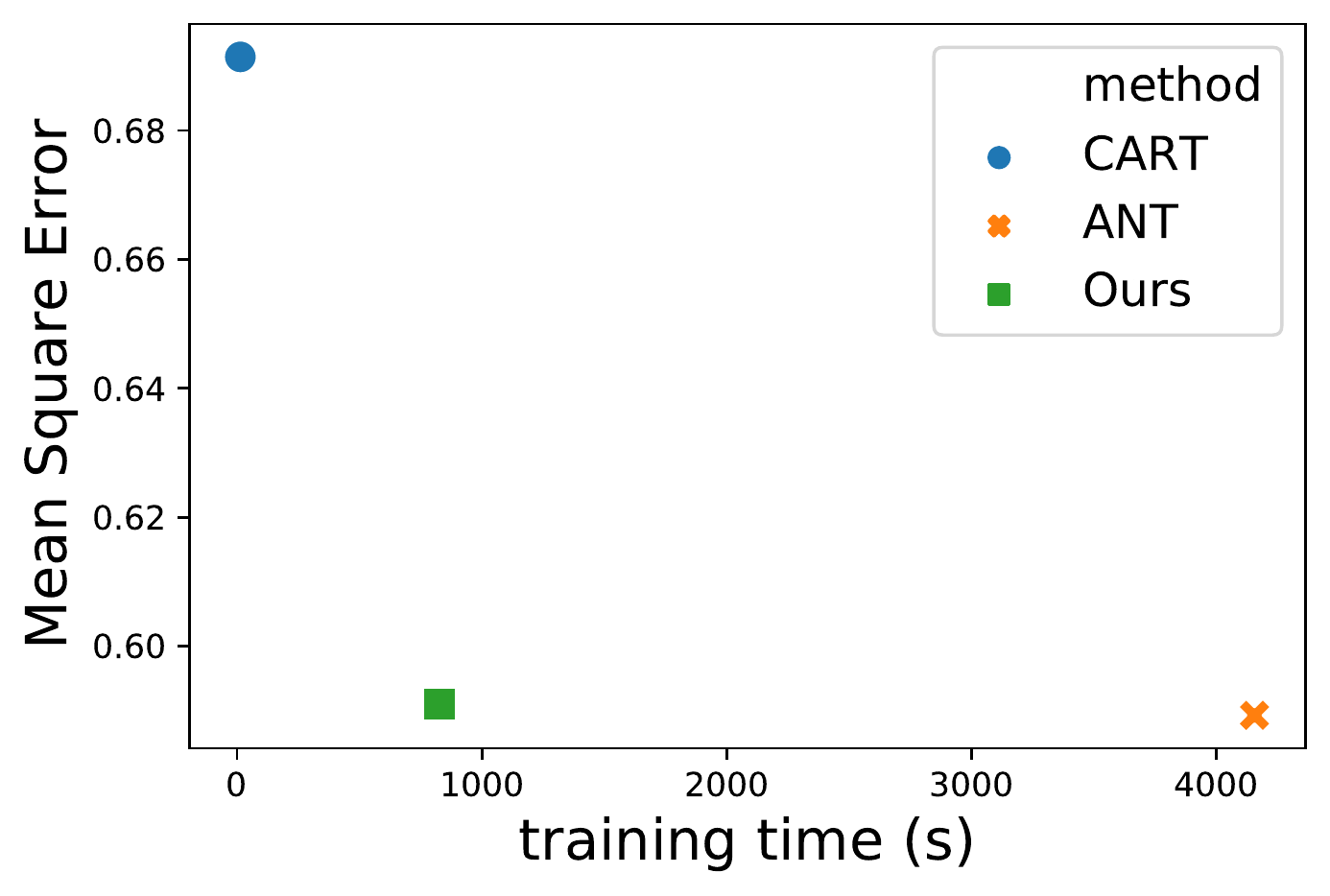}
    }\hfill
    \caption{Average (a,b) Error Rate (c-e) Mean Square Error vs average training time required by each method.}
    \label{fig:times}
\end{figure}

\paragraph{Further comparison with optimal tree baselines}
We run a set of experiments on small binary classification datasets to compare our method with optimal tree methods, which do not scale well with the dataset size.
Specifically we compare against two versions of Optree: one that solves the MIP exactly (Optree-MIP) \citep{bertsimas2017optimal}, and another that solves it with local search Optree-LS \citep{dunn2018optimal}.
We also compare with the state-of-the-art optimal tree method GOSDT
and to CART.
We consider the binary classification datasets
\begin{itemize}
    \item {\em Mushrooms}~\citep{schlimmer1987concept}: prediction between edible and poisonous mushrooms, with $8124$ instances and $22$ features;
 \item {\em FICO}\footnote{\url{https://community.fico.com/s/explainable-machine-learning-challenge}}: loan approval prediction, with $10459$ instances and $24$ features;
\item {\em Tictactoe}\footnote{\url{https://archive.ics.uci.edu/ml/datasets/Tic-Tac-Toe+Endgame}}: endgame winner prediction, with $958$ instances and $9$ features.
\end{itemize}
We apply a stratified split on all datasets to obtain $60\%$-$20\%$-$20\%$ training-validation-test sets,  convert categorical features to ordinal, and z-score them. 
For our method, we apply the Quasi-Hyperbolic Adam optimization algorithm, with batch size equal to $512$ for {\em Mushrooms} and {\em FICO}, and $64$ for {\em Tictactoe}.

We chose the hyper-parameters as follows:
\begin{itemize}
    \item {\bf Ours}: we tune $D$ in $\{2, \dots, 6\}$, $\lambda$ in $[1\text{e-}3, 1\text{e+}3]$ with log-uniform draws, and make use of a linear predictor and of linear splitting functions without activation;
    \item {\bf Optree-MIP}: we fix the tree depth to $D = 6$;
    \item {\bf Optree-LS}: we tune $D$ in $\{2, \dots, 10\}$;
    \item {\bf GOSDT}:  we tune the regularization parameter $\lambda$ in $[5\text{e-}3, 1\text{e+}3]$ with log-uniform draws, and set accuracy as the objective function.
    \item {\bf CART}: we tune $D$ in $\{2, \dots, 20\}$, \textit{feature rate} uniformly in $[0.5, 1]$, \textit{minimal impurity decrease} in $[0, 1]$, $\alpha$ log-uniformly in $[1\text{e-}16, 1\text{e+}2]$ and \textit{splitter} chosen between best or random.
\end{itemize}

Table~\ref{tab:small} reports the performance of all methods across $4$ runs.
Both OPTREE variants do not finish in under $3$ days on \emph{Mushrooms} and {\em FICO}, which have more than $1000$ instances.
CART is the fastest method and ranks second in terms of error rate on all datasets but {\em Tictactoe}, where it surprisingly beats all non-greedy baselines.
Our method outperforms optimal tree baselines in terms of error rate and training time on all datasets.
We believe the slow scaling of GOSDT is due to the fact that it binarizes
features, working with $118$, $1817$, $27$ final binary attributes respectively on {\em Mushrooms}, {\em FICO} and {\em Tictactoe}.
Apart from achieving superior performance on supervised classification tasks, we stress the fact that our method generalizes to more tasks, such as supervised regression and clustering as shown in the main text.

\begin{table}[t!]
\caption{Results on the Mushrooms tabular dataset. We report average training time (s), and average and standard deviations of test error rate over $4$ runs for binary classification datasets. We \textbf{bold} the best result (and those within a standard deviation from it). Experiments are run on a machine with 16 CPUs and 63GB  RAM. 
We limit the training time to $3$ days.}
\label{tab:small}
\begin{center}
\scalebox{0.9}{
\begin{tabular}{lcccccc} \hline
 METHOD &\multicolumn{2}{c}{MUSHROOMS}&\multicolumn{2}{c}{FICO} &\multicolumn{2}{c}{ TICTACTOE}\\
 & training time & error rate & training time & error rate & training time & error rate \\ \hline
CART & $0.004$s & $\mathbf{0.0003 \pm 0.0003}$ & $0.01$s & $0.3111 \pm 0.0042$ & $0.001$s & $\bm{0.1576 \pm 0.0203}$ \\
OPTREE-MIP & OOT & - & OOT & - & OOT & -\\
OPTREE-LS & OOT & - & OOT & - & $751$s & $0.449 \pm 0.0184$\\
GOSDT  & $214$s & $0.0125 \pm 0.0027$ & $634$s & $0.3660 \pm 0.0090$ & $40$s & $0.  3490 \pm 0.0010$\\
Ours    & $20$s & \bm{$0.0005 \pm 0.0008$} & $10$s & $\bm{0.2884 \pm  0.0020}$ & $13$s & $0.2669 \pm  0.0259$\\ \hline
\end{tabular}}
\end{center}

\end{table}

\end{document}